\def\eqref#1{equation~\ref{#1}}
\def\1{\bm{1}}
\DeclareMathAlphabet{\mathsfit}{\encodingdefault}{\sfdefault}{m}{sl}
\SetMathAlphabet{\mathsfit}{bold}{\encodingdefault}{\sfdefault}{bx}{n}
\newtheorem{theorem}{Theorem}
\newtheorem{lemma}[theorem]{Lemma}
\newtheorem{assumption}{Assumption}
\definecolor{LQY_color}{RGB}{50, 205, 50}
\newcommand{\expect}{\mathop{\mathbb E}}%
\DeclareMathOperator*{\minimize}{min}
\DeclareMathOperator*{\maximize}{max}
\let\oldnl\nl
\newcommand{\nonl}{\renewcommand{\nl}{\let\nl\oldnl}}
\definecolor{revision_color}{RGB}{196, 2, 51}
\newcommand*\samethanks[1][\value{footnote}]{\footnotemark[#1]}
\title{
Efficient Learning of Safe Driving Policy via Human-AI Copilot Optimization
}
\author{Quanyi Li$^1$\thanks{Quanyi Li and Zhenghao Peng contribute equally to this work.},\quad   Zhenghao Peng$^{2}\samethanks$, \quad Bolei Zhou$^3$ \\
$^1$Centre for Perceptual and Interactive Intelligence, $^2$The Chinese University of Hong Kong,\\ $^3$University of California, Los Angeles}
\begin{document}

\maketitle

\begin{abstract}

Human intervention is an effective way to inject human knowledge into the loop of reinforcement learning, bringing fast learning and training safety. 
But given the very limited budget of human intervention, it is challenging to design when and how human expert interacts with the learning agent in the training. 
In this work, we develop a novel human-in-the-loop learning method called Human-AI Copilot Optimization (HACO).
To allow the agent's sufficient exploration in the risky environments while ensuring the training safety, the human expert can take over the control and demonstrate to the agent how to avoid probably dangerous situations or trivial behaviors.
The proposed HACO then effectively utilizes the data collected both from the trial-and-error exploration and human's partial demonstration to train a high-performing agent.
HACO extracts proxy state-action values from partial human demonstration and optimizes the agent to improve the proxy values while reducing the human interventions.
No environmental reward is required in HACO.
The experiments show that HACO achieves a substantially high sample efficiency in the safe driving benchmark. It can train agents to drive in unseen traffic scenes with a handful of human intervention budget and achieve high safety and generalizability, outperforming both reinforcement learning and imitation learning baselines with a large margin. Code and demo videos are available at: \url{https://decisionforce.github.io/HACO/}. 
\end{abstract}

\section{Introduction}

How to effectively inject human knowledge into the learning process is one of the key challenges to training reliable autonomous agents in safety-critical applications.
In reinforcement learning (RL), researchers can inject their intentions into the carefully designed reward function. 
The learning agent freely explores the environment to collect the data and develops the desired behaviors induced by the reward function.
However, RL methods bear two drawbacks that limit their applications in safety-critical tasks:
First, the nature of trial-and-error exploration exposes RL agent to dangerous situations~\citep{saunders2017trial}. 
Second, it is difficult to summarize all the intended behaviors to be learned into the reward function.
Taking the driving vehicle as an example, an ideal policy should obtain a set of skills, such as overtaking, yielding, emergent stopping, and negotiation with other vehicles. It is intractable to manually design a reward function that leads to the emergence of all those behaviors in the trained agent.
To mitigate these two challenges, practitioners enforce the human intentions through imitation learning (IL) where the agent is trained to imitate the expert-generated state and action sequences. 
During the demonstration, the premature agent does not interact with the risky environment and thus the training safety is ensured.
High-quality expert demonstrations provide direct the optimal solution for the agent to imitate from.
However, IL paradigm suffers from the distributional shift problem~\citep{ross2010efficient, ross2011reduction} while the induced skills are not sufficiently robust with respect to changes in the control task~\citep{camacho1995behavioral}.

Different from vanilla RL or IL, human-in-the-loop learning is an alternative paradigm to inject human knowledge, where a human subject accompanies the agent and oversees its learning process. 
Previous works require the human to either passively advise which action is good~\citep{mandel2017add} or evaluate the collected trajectories~\citep{christiano2017deep,guan2021widening,reddy2018shared, warnell2018deep,christiano2017deep, sadigh2017active, palan2019learning}.
This kind of passive human involvement exposes the human-AI system to risks since the agent explores the environment without protection.
Some other works require the human to merely intervene in the exploration by terminating the episode~\citep{saunders2018trial,zhang2016query}, but it is not practical to terminate and reset the environment instantly in the real world~\citep{xu2020continual}. 
Intervening and taking over the control from the learning agent is a natural approach to safeguard the human-AI system~\citep{kelly2019hg,spencer2020learning}.
However, a challenge exhibited in previous works is the budget of human intervention. Since human cognitive resource is precious and limited, it is essential to carefully design when and how the human expert involves in the learning process so that the human knowledge can be injected effectively.

In this work, we propose an efficient human-in-the-loop learning method called \textit{\textbf{H}uman-\textbf{A}I \textbf{C}opilot \textbf{O}ptimization (\textbf{HACO})}. The key feature of HACO is that it can learn to minimize the human intervention and adjust the level of automation to the learning agent adaptively during the training. As shown in Figure~\ref{figure:framework} A, HACO allows the human expert to take over the human-AI system in a proactive manner. If the human decides to intervene in the action of the agent, he/she should demonstrate the correct actions to overcome current undesired situations to the learning agent. 
The human intervention and the partial demonstration are two sources of informative training data.
We use offline RL technique to maintain a proxy value function of the human-AI mixed behavior policy even though the agent doesn't have the access to the environmental reward during training.
To encourage the exploration in the state-action space permitted by human, we also maximize the entropy of action distribution of the agent if the agent is not taken over.

Experiments in the virtual driving environments MetaDrive~\citep{li2021metadrive} and CARLA~\citep{DBLP:journals/corr/abs-1711-03938} show that, with an economic human budget, HACO outperforms RL and IL baselines with a substantial margin in terms of sample efficiency, performance, safety, and generalizability in the unseen testing environment.
Thus the human-AI copilot optimization is an efficient learning paradigm to inject human knowledge in an online setting.

\begin{figure}[!t]
\vspace{-3em}
\centering
\includegraphics[width=1\linewidth]{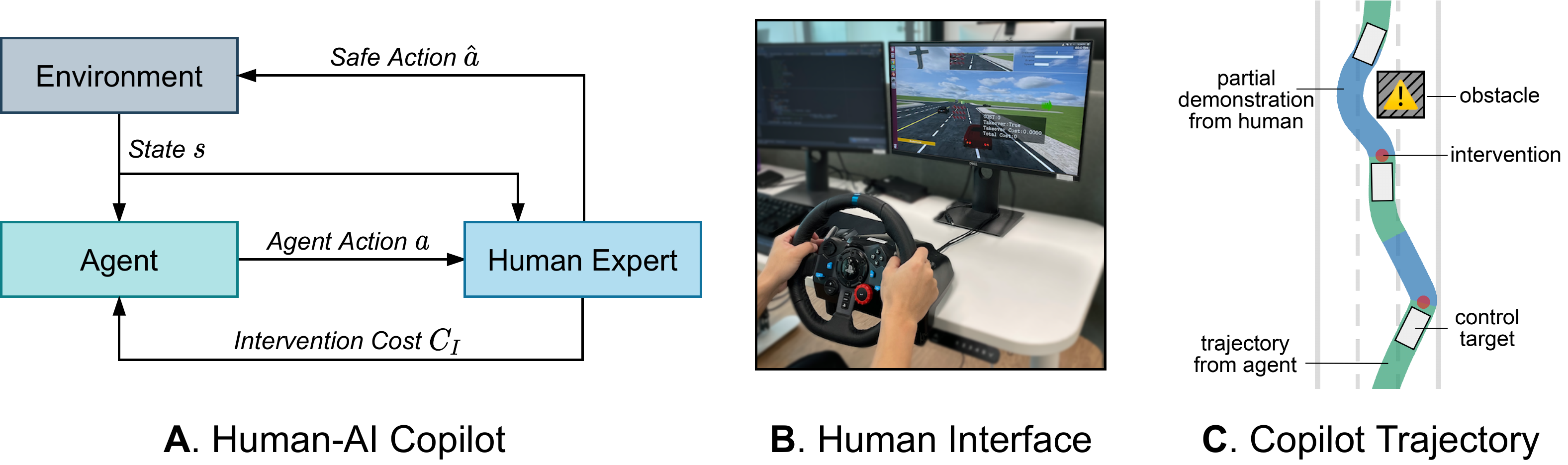}
\caption{
\textbf{A.} In the proposed human-in-the-loop method, human subjects will decide whether to intervene in the execution of the learning agent and demonstrate the correct action. Such intervention incurs a cost to the agent such that it learns to minimize the total cost during the training.
\textbf{B.} The human interface for copilot during the training. 
The human subject can steer the wheel or press the paddle to start the intervention and then the demonstration data will be recorded. 
\textbf{C.} An illustrative driving trajectory where the green segments are generated by the agent and the blue segments are the demonstrations of the human expert to overcome the dangerous situations. 
}
\label{figure:framework}
\vspace{-1em}
\end{figure}



\section{Related Work}

\textbf{Learning from Demonstration.} 
Passive imitation learning such as behavior cloning~\citep{widrow1964pattern, osa2018algorithmic, Huang_DeepDecision_CoRL2020, Sun_NSPS_CoRL20} and recently proposed offline RL methods~\citep{kumar2020conservative, fujimoto2018off, wu2019behavior} train agents from an out-of-the-shelf data set and guarantee the training safety, since no interaction with the environment is needed.
Inverse RL methods~\citep{ng2000algorithms, abbeel2004apprenticeship, fu2017learning, bloem2014infinite} learn a reward function from the human demonstration and then use it to incentivize the agents to master the intended behaviors. 
Proposed more recently, GAIL~\citep{ho2016generative} and its variants~\citep{song2018multi, sasaki2018sample, kostrikov2018discriminator} and SQIL~\citep{reddy2019sqil} compare the trajectory similarity between agents and humans and thus require the agent to interact with the environment. Similar to RL methods, this paradigm exposes the agent to potentially dangerous situations.

\textbf{Human-in-the-loop Learning Methods.}
Many works focus on incorporating human in the training loop of conventional RL or IL paradigms. 
DAgger~\citep{ross2011reduction} and its extended methods~\citep{kelly2019hg, zhang2016query, hoque2021thriftydagger} correct the compounding error~\citep{ross2010efficient} of behavior cloning by periodically requesting expert to provide more demonstration.
Instead of proving demonstration upon requests, Human-Gated DAgger (HG-DAgger)~\citep{kelly2019hg}, Expert Intervention Learning (EIL)~\citep{spencer2020learning} and  Intervention Weighted Regression (IWR)~\citep{mandlekar2020human} empower the expert to intervene exploration and carry the agent to safe states.
However, these methods do not impose constraints to reduce human intervention and do not utilize the data from the free exploration of the agent.
Human subjects can also involve in the loop providing preferences based on evaluative feedback on two behavior sequences generated by the agent~\citep{christiano2017deep, sadigh2017active, palan2019learning, ibarz2018reward, cui2018active}.

Human-AI copilot or shared autonomy is a more intimate form of the human-in-the-loop methods. The AI agent and human are working together simultaneously to achieve a common goal.
By giving human guidance and feedback at run-time instantly, the explorable state and action spaces can be greatly narrowed down~\citep{saunders2018trial}.
The learning goal can further match the task objective by providing extra human feedback combined with reward function~\citep{reddy2018shared, warnell2018deep, wu2021human, cederborg2015policy, arumugam2019deep}.
Human-AI copilot is helpful and practical when applying RL to real world tasks where safety constraints must be satisfied~\citep{garcia2015comprehensive,amodei2016concrete,bharadhwaj2020conservative}. 
In our previous work~\citep{peng2021safe}, we made attempt to develop a method called Expert-Guided Policy Optimization (EGPO) where a PPO expert policy is involved to monitor the learning agent. The difference can be summarized as twofold: (1) We substitute the expert with human and design special mechanism to mitigate the delayed feedback error; (2) Based on the comprehensive ablation study and prototyping, we remove redundant designs like takeover function and the need of reward function, making the proposed method simple yet effective.

Reducing human burden is a major challenge in human-in-the-loop methods.
A feasible solution is to learn an intervention function that imitates human intervention signals and stops the catastrophic actions of agents~\citep{kelly2019hg, zhang2016query, saunders2017trial, abel2017agent}, which can relieve the mental stress of the human subject during training.
In this work, we devise our learning scheme explicitly to include the human cognitive cost as one of the objectives to minimize.

\section{Human-AI Copilot Optimization}

In this section, we introduce \textit{\textbf{H}uman-\textbf{A}I \textbf{C}opilot \textbf{O}ptimization (\textbf{HACO})}, an efficient learning algorithm that trains agents from human interventions, partial demonstrations and free exploration.
For human-in-the-loop learning, it is essential to design when and how to engage human subjects. The major issue is the \textit{cognitive cost} of the human subject~\citep{zhang2021recent}.
Frequent querying might bring tremendous cognitive cost and exhaust the human expert, causing incorrect or delayed feedback that hinders the training.
Thus the proposed pipeline aims to minimize the human intervention cost during the training, which reduces the reliance on the expert's demonstration over time and improves the learning agent's autonomy. 
The overall workflow of HACO is presented in Algorithm~\ref{algorithm:HACO}.

\subsection{Human-AI Copilot Training Paradigm}
\label{section:Human-AI-Copilot-Paradigm}

We aim to learn an autonomous agent with policy $\pi_n(a_n | s)$ that can make informed action $a_n$ in state $s$.
As shown in Fig.~\ref{figure:framework}, we frame the \textit{human-AI copilot paradigm} that extends the standard reinforcement learning diagram by incorporating a human expert.
At each step, the human expert oversees current state and decides whether to intervene. If necessary, he/she will execute human action $a_h$ to overwrite the agent's action $a_n$.
We denote the human intervention by a Boolean indicator $I(s,a_n)$ and thus the action applied to the environment is called the safe action $\hat{a} = I(s,a_{n}) a_{h} + (1 - I(s,a_{n})) a_{n}$.
Denoting the human policy as $\pi_h$, the actual trajectories occurred during training are derived from a shared behavior policy $\pi_b$:
\begin{equation}
  \pi_b(a|s) 
  = 
  {\pi_n}(a|s) (1 - I(s,a))
  + 
  {\pi_h}(a|s) G(s),
\end{equation}
wherein $G(s) = \int_{a'\in \mathcal A} I(s, a') \pi_n(a' |s) da'$ is the probability of the agent choosing an action that will be rejected by the human.

We call the transition sequences during the takeover $\{ (s_t, a_{n, t}, a_{h, t}, I(s_t, a_{n,t}), s_{t+1}), ...\}$ as the \textit{partial demonstration}.
The partial demonstration and the free exploration transitions will be recorded in the replay buffer $\mathcal B$ and fed to the training pipeline. 
Note that we do not require to store environmental reward and cost into the buffer since the proposed method does not need them.

In the human-AI copilot training, the human is obligated to guide the agent learning and safeguard the learning process by proactively taking over the control if necessary.
This paradigm rules out the dispensable states and mitigates the safety concern in free exploration of RL and active imitation learning methods~\citep{ross2011reduction}.
Different from previous offline RL works training from fixed dataset~\citep{bojarski2016end, ho2016generative, reddy2019sqil, kumar2020conservative, fujimoto2018off, wu2019behavior} where no closed loop feedback is accessible, the human-AI copilot training produces \textit{partial demonstrations} that contains the necessary human knowledge to overcome dangerous situations into the learning.
The copilot nature alleviates the distributional shift problem, since the human intervenes when the agent performs suspicious behaviors, so that there is a continuity of the state visitation between the agent and the expert.

In next section, we will introduce how we instantiate the human-AI copilot paradigm with a human-efficient algorithm that can effectively optimize the agent toward safe and high-performing policy.

\subsection{Learning Objectives}
\label{sect:learning-objectives}

We form three objectives that fully utilize the human data:
(1) Agent should maximize a \textit{proxy value function} $Q(s, a)$ which reflects human intentions on how to finish the task.
(2) Agent should explore thoroughly to visit the state-action subspace permitted by the human. Concretely, we maximize the action distribution entropy $\mathcal H(\pi(\cdot|s))$.
(3) Agent should maximize the level of automation and reduce human intervention. 
Episodic human intervention is estimated by an intervention value function $Q^{int}(s, a)$ based on the step-wise intervention cost $C(s, a)$.
Thus the overall learning objective of HACO becomes:
\begin{equation}
    \maximize_{\pi} \expect 
    [
    Q(s, a)+\mathcal H(\pi)-Q^{int}(s, a)
    ].
\end{equation}

\label{section:implementation-details}

We then discuss the practical implementation of aforementioned design goals.

\textbf{Proxy value function.}
HACO follows reward-free setting so we can't estimate the expected state-action value based on a ground-truth reward function defined by the environment.
We instead estimate a proxy value function $Q(s, a; \phi)$ ($\phi$ is model parameters) that captures the ordinal preference of human experts, which implicitly reflects human intentions.
We utilize the conservative Q-learning~\citep{kumar2020conservative} and form the optimization problem of the proxy value function as:
\begin{equation}
\label{equation:CQLloss}
    \minimize_{\phi} 
    \expect_{(s, a_n, a_h, I(s, a_n)) \sim \mathcal{B}} [
    I(s, a_n)(Q(s, a_n; \phi) - Q(s, a_h; \phi))
    ].
\end{equation}

The above optimization objective can be interpreted as being optimistic to the human's action $a_h$ and pessimistic to the agent's action $a_n$.
The proxy value function learns to represent the high-value state-action subspace preferred by the human expert.

\textbf{Entropy regularization.}
If the learning agent visits human-preferable subspace insufficiently during free explorable sampling, the states evoking high proxy value are rarely encountered, making the back-propagation of the proxy value to preceding states difficult and thus damaging the learning.
To encourage exploration, we adopt the entropy regularization technique in \citep{haarnoja2018soft} and forms auxiliary signal to update the proxy value function apart from Eq.~\ref{equation:CQLloss}:

\begin{equation}
\label{equation:entropy-in-q}
\begin{aligned}
& \minimize_\phi \expect_{(s_t, \hat{a}_t, s_{t+1}) \sim \mathcal B}
[
y
-
Q(s_t, \hat{a}_{t}; \phi)
]^2, 
& y  = \gamma \expect_{a'\sim \pi_n(\cdot|s_{t+1})}[
Q(s_{t+1},a'; \phi') - \alpha \log \pi_n (a'|s_{t+1})
],
\end{aligned}
\end{equation}
wherein $\hat{a}_{t}$ is the executed action at state $s_t$, ${\phi'}$ denotes the delay updated parameter of the target network, $\gamma$ is the discount factor. Since the environment reward is not accessible to HACO, we remove the reward term in the update target $y$.
Combining Eq.~\ref{equation:CQLloss} and Eq.~\ref{equation:entropy-in-q}, the formal optimization objective of the proxy value function becomes:
\begin{equation}
\label{eq:proxy-value-function-update-rule}
\minimize_\phi \expect_{\mathcal B}
[
(y - Q(s_t,\hat{a}_t; \phi) )^2
+
I(s_t, a_{n,t})(Q(s_t,a_{n,t}; \phi) - Q(s_t,a_{h,t}; \phi))
].
\end{equation}

\textbf{Reducing human interventions.}
Directly optimizing the agent policy according to the proxy value function will lead to failure when evaluating the agent without human participation. 
This is because $Q(s, a)$ represents the proxy value of the mixed behavior policy $\pi_{b}$ instead of the learning agent's $\pi_n$ due to the existence of human intervention.
It is possible that the agent learns to deliberately abuse human intervention by always taking actions that violate human intentions, such as driving off the road when near the boundary, which forces human to take over and provide demonstrations.
In this case, the level of automation for the agent is low and the human subject exhausts to provide demonstrations. 
Ablation study result in Table~\ref{tab:ablation-studies}(c) illustrates this phenomenon.

To economically utilize human budget and reduce the human interventions over time, 
we punish the agent action that triggers human intervention in a mild manner by using the cosine similarity between agent's action and human's action as the intervention cost function in the form below:
\begin{equation}
\label{equation: cost-function}
    C(s, a_n) = 1-\cfrac{{a_n}^{\mathsf{T}}a_h}{||a_n|| ||a_{h}||},\ \ a_h \sim \pi_h(\cdot|s).
\end{equation}
The agent will receive large penalty only when its action is significantly different from the expert action in terms of cosine similarity. 

A straightforward form of $C$ is a constant $+1$ when human expert issues intervention.
However, we find that there usually exists temporal mismatch $\epsilon$ between human intervention and faulty actions so that the intervention cost is given to the agent at a delayed time step $t + \epsilon$.
It is possible that the agent's action $a_{n, t + \epsilon}$ is a correct action that saves the agent itself from dangers but is mistakenly marked as faulty action that triggers human intervention.
In the ablation study, we find that using the constant cost raises inferior performance compared to the cosine similarity.

\begin{figure}[!t]
\vspace{-3em}
\centering
\RestyleAlgo{ruled}
\begin{algorithm}[H]
\caption{The workflow of HACO during training}
\label{algorithm:HACO}
\SetAlgoLined
\LinesNumbered
\DontPrintSemicolon
Initialize an empty replay buffer $\mathcal B$ \\
 \While{Training is not finished}{
    \While{Episode is not terminated}{
        $a_{n, t} \sim \pi_n(\cdot | s_t)$ Retrieve agent's action \\
        $I(s_t, a_{n, t}) \gets $ Human expert decides whether to intervene by observing current state $s_t$ \\
        \uIf{$I(s_t, a_{n, t})$ is True}{
            $a_{h, t} \gets \pi_h(\cdot | s_t)$ Retrieve human's action \\
            Apply $a_{h, t}$ to the environment \\
        }
        \Else{
            Apply $a_{n, t}$ to the environment \\
        }
        \uIf{$I(s_t, a_{n, t})$ is True and $I(s_{t-1}, a_{n, t-1})$ is False}{
            $C(s_t, a_{n, t}) \gets $ Compute intervention cost following Eq.~\ref{equation: cost-function} \\
        }
        \Else{
            $C(s_t, a_{n, t}) \gets 0$ Set intervention cost to zero
        }
        Record $s_t, a_{n, t}$, $I(s_t, a_{n, t})$ and $a_{h, t}$ (if $I(s_t, a_{n, t})$) to the buffer $\mathcal B$
    }
    Update proxy value $Q$, intervention value $Q^{int}$ and policy $\pi$ according to Eq.~\ref{eq:proxy-value-function-update-rule}, Eq.~\ref{eq:intervention-value-function-update-rule}, Eq.~\ref{eq:policy-update-rule} respectively
}
\end{algorithm}
\vspace{-1em}
\end{figure}

As shown in Line 11-14 of Algorithm~\ref{algorithm:HACO}, we only yield non-zero intervention cost at the first step of human intervention.
This is because the human intervention triggered by the exact action $a_{n, t}$ indicates this action violates the underlying intention of human at this moment. Minimizing the chance of those actions will increase the level of automation.

To improve the level of automation, we form an additional intervention value function $Q^{int}(s, a)$ as the expected cumulative intervention cost, similar to estimating the state-action value in Q-learning through Bellman equation:
\begin{equation}
\label{eq:intervention-value-function-update-rule}
    Q^{int}(s_t, a_{t}) = C(s_t,a_{t}) + \gamma \expect_{s_{t+1}\sim \mathcal B, a_{t+1}\sim \pi_n(\cdot | s_{t+1})}[Q^{int}(s_{t+1},a_{t+1})].
\end{equation}
This value function is used to directly optimize the policy.

\textbf{Learning policy.}
Using the entropy-regularized proxy value function $Q(s, a)$ as well as the intervention value function $Q^{int}(s, a)$, we form the the policy improvement objective as:
\begin{equation}
\label{eq:policy-update-rule}
\maximize_{\theta}
\expect_{s_t\sim \mathcal B} [
Q(s_t, a_n)
-
\alpha \log \pi_n(a_n|s_t; \theta)
-
Q^{int}(s_t, a_{n})
], \ \ a_n \sim \pi_n(\cdot | s_t; \theta)
.
\end{equation}

\section{Experiments}

\subsection{Experimental Settings}

\textbf{Task.}
We focus on the driving task in this work.
This is because driving is an important decision making problem with a huge social impact, where safety and training efficiency are critical.
Since many researches on autonomous driving employ human in a real vehicle~\citep{bojarski2016end,kelly2019hg}, the human safety and human cognitive cost become practical challenges that limit the application of learning-based methods in industries.
Therefore, the driving task is an ideal benchmark for the human-AI copilot paradigm.

\textbf{Simulator.}
Considering the potential risks of employing human subjects in physical experiments, we benchmark different approaches in the driving simulator.
We employ a lightweight driving simulator MetaDrive~\citep{li2021metadrive}, which preserves the capacity to evaluate the safety and generalizability in unseen environments.
The simulator is implemented based on Panda3D~\citep{goslin2004panda3d} and Bullet Engine that has high efficiency as well as accurate physics-based 3D kinetics.
MetaDrive uses procedural generation to synthesize an unlimited number of driving maps for the split of training and test sets, which is useful to benchmark the generalization capability of different approaches in the context of safe driving. Some generated driving scenes are presented in Fig.~\ref{figure_appendix:environment}.
The simulator is also extremely efficient and flexible so that we can run the human-AI copilot experiment in real-time.
Though we mainly describe the setting of MetaDrive in this section, we also experiment on CARLA~\citep{DBLP:journals/corr/abs-1711-03938} simulator in Sec.~\ref{section:ablation-study}.

\begin{figure}[!t]
\vspace{-3em}
\begin{minipage}{0.23\textwidth}
    \centering
    \includegraphics[width=1\linewidth]{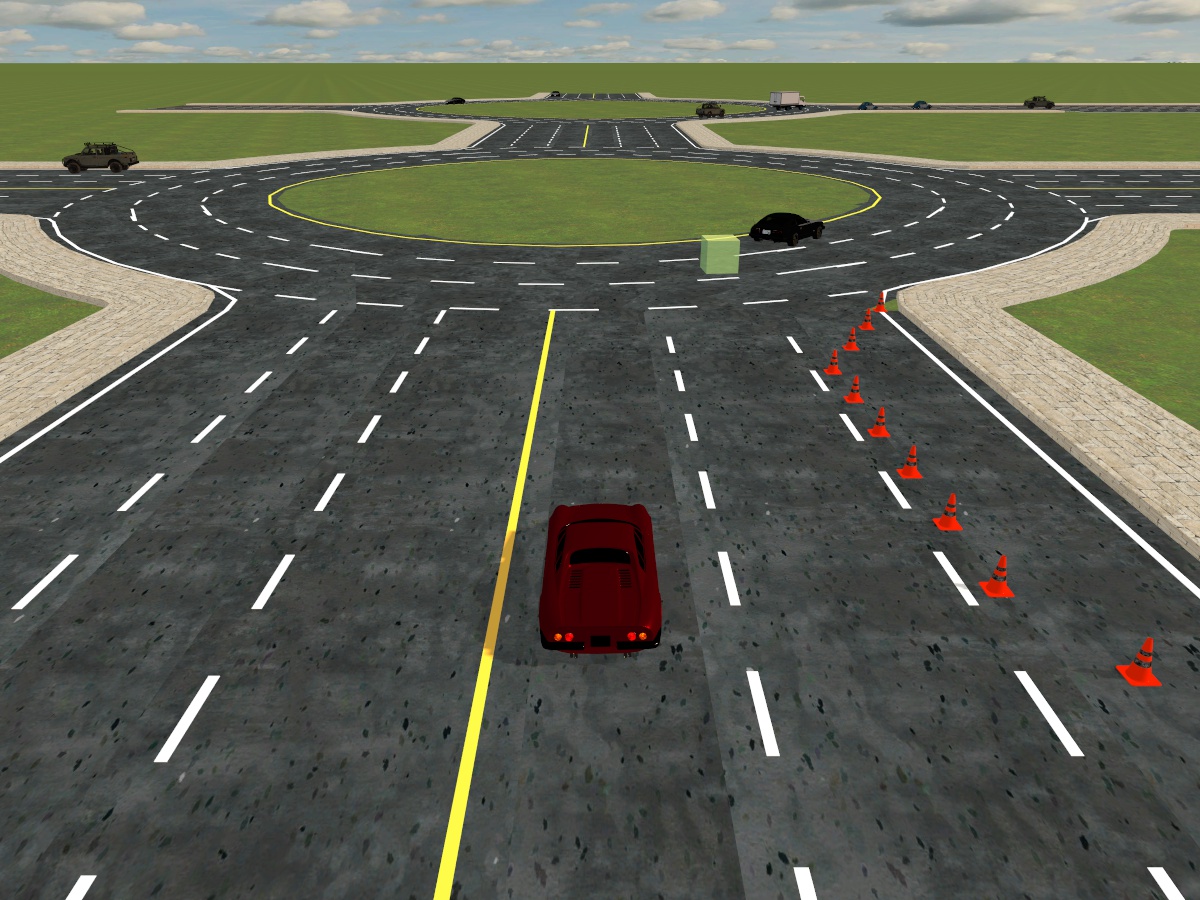}
\end{minipage}
\hfill
\begin{minipage}{0.23\textwidth}
    \centering
    \includegraphics[width=\linewidth]{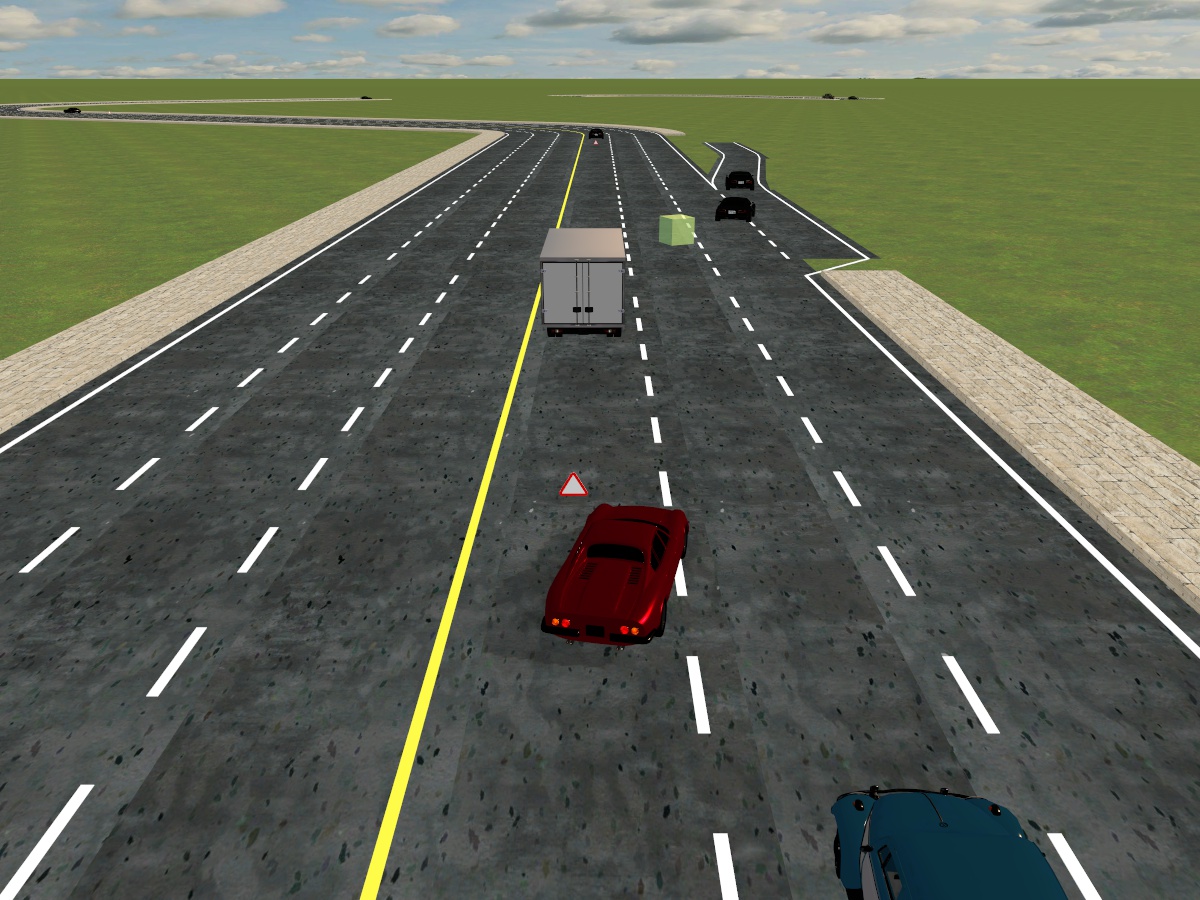}
\end{minipage}\hfill
\begin{minipage}{0.23\textwidth}
    \centering
    \includegraphics[width=\linewidth]{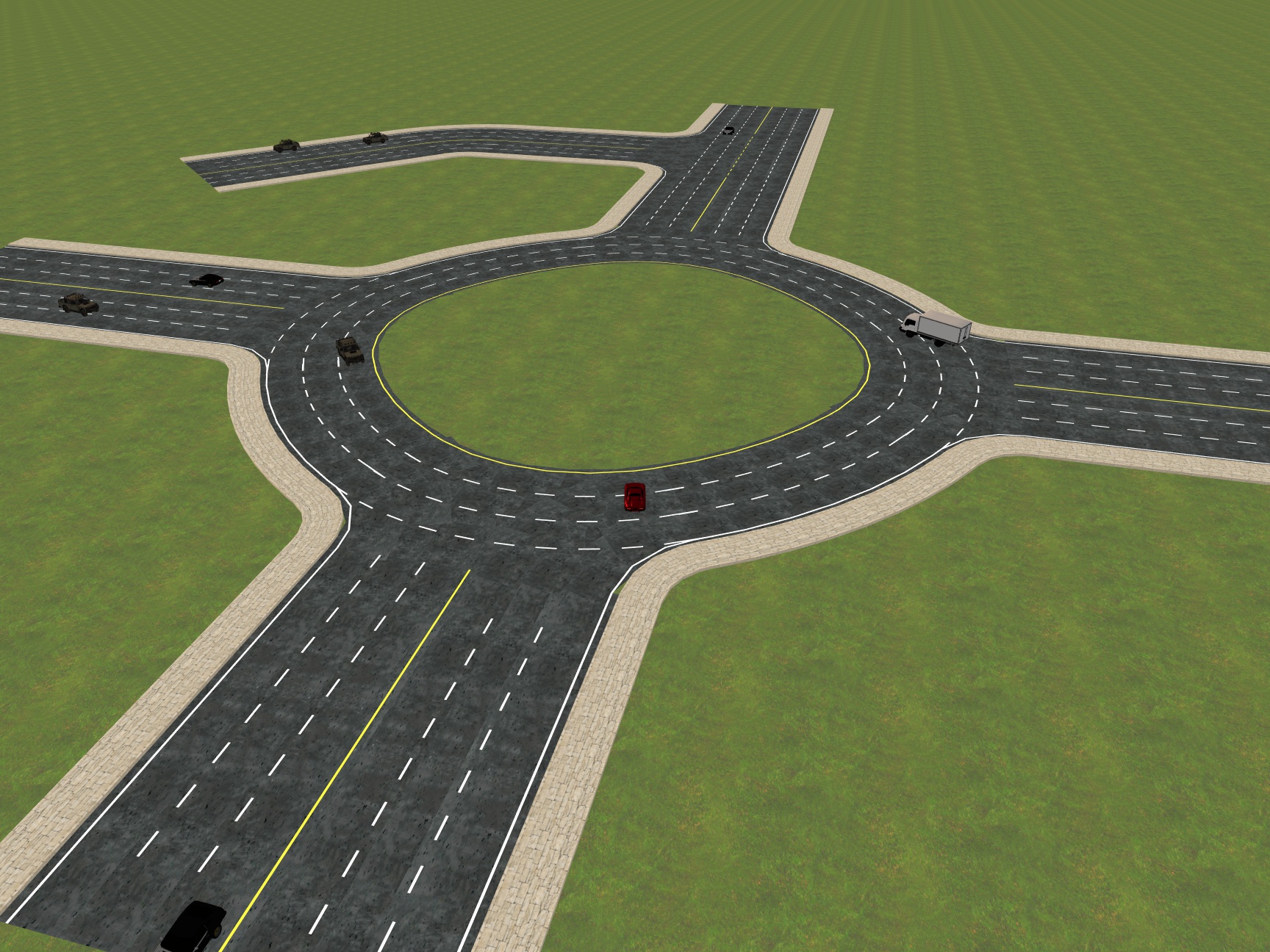}
\end{minipage}
\hfill
\begin{minipage}{0.23\textwidth}
    \centering
    \includegraphics[width=\linewidth]{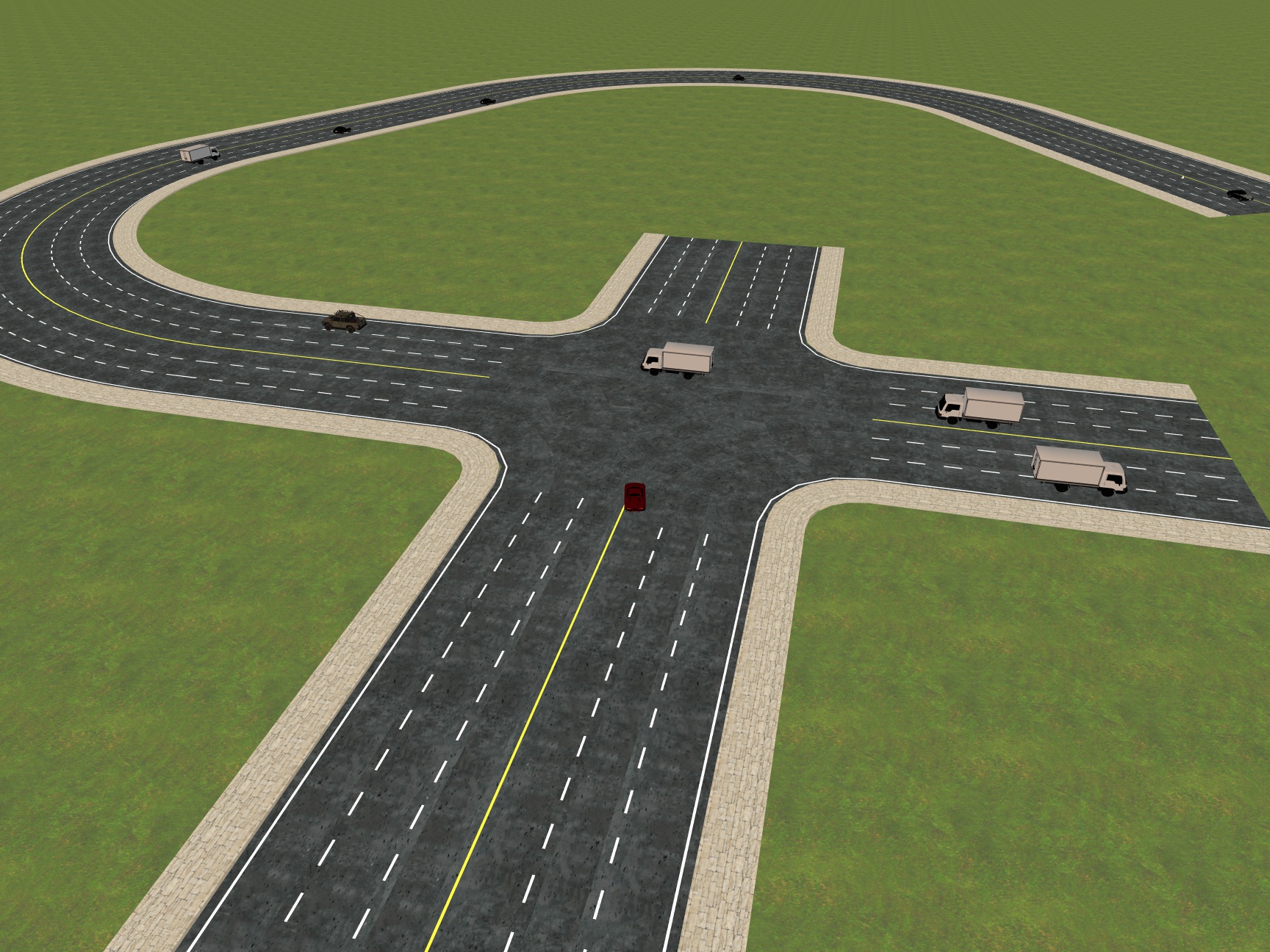}
\end{minipage}
\caption{Examples of the safe driving environments from MetaDrive used in the experiments. 
}
\label{figure_appendix:environment}
\vspace{-1em}
\end{figure}

\textbf{Training Environment.}
In the simulator, the task for the agent is to steer the target vehicle with low-level control signal, namely acceleration, brake and steering, to reach the predefined destination and receive a \textit{success} flag. The ratio of episodes where the agent successfully reaches the destination is called the \textit{success rate}. 
To increase the difficulty of the task, we scatter obstacles randomly in each driving scene such as movable traffic vehicles, fixed traffic cones, and warning triangles.

The observation contains (1) the current states such as the steering, heading, velocity and relative distance to boundaries \textit{etc.}, (2) the navigation information that guides the vehicle toward the destination, and (3) the surrounding information encoded by a vector of 240 Lidar-like distance measures of the nearby vehicles.

Though HACO does not receive environmental reward during training, we provide reward function to train baseline methods and evaluate HACO in test time. The reward function contains a dense driving reward, speed reward and a sparse terminal reward. The driving reward measures the longitudinal movement toward destination.
We also reward agent according to its velocity and give the sparse reward $+20$ when the agent arrives at the destination. 

Each collision to the traffic vehicles or obstacles yields $+1$ environmental cost. Note that HACO can not access this cost during the training. This cost is used to train safe RL baselines as well as for testing the safety of trained policies. 
We term the episodic cost as \textit{safety violation} which is the measurement on the safety of a policy.

We invite the human expert to supervise the real-time exploration of the learning agent with hands on the steering wheel, as shown in the Fig.~\ref{figure:framework}{B}.
When a dangerous situation is going to happen, the human takes over the vehicle by pressing the paddle besides the wheel and starts controlling the vehicle by steering the wheel and stepping the pedals.

\textbf{Split of training and test sets.} Different from the conventional RL setting where the agent is trained and tested in the same fixed environment, we focus on evaluating the generalization performance through testing the trained agents in separated test environments.
We split the driving scenes into the training set and test set with 50 different scenes in each set.
After each training iteration, we roll out the learning agent \textit{without guardian} in the test environments and record success rate and safety violation given by the environment and present it in Table~\ref{tab:main-results}.

\textbf{Implementation details.} We conduct experiments on the driving simulator and implement algorithms using RLLib~\citep{liang2018rllib}, an efficient distributed learning system. When training the baselines, we host 8 concurrent trials in an Nvidia GeForce RTX 2080 Ti GPU. Each trial consumes 2 CPUs with 8 parallel rollout workers.
Except human-in-the-loop experiments, all baseline experiments are repeated 5 times with different random seeds. 
The main experiments of HACO is conducted on a local computer with an Nvidia GeForce RTX 2070 and repeat 3 times. The ablations and baseline human-in-the-loop experiments repeat once due to the limited human budget. 
One human subject participates in each experiment.
In all tables and figures, we provide the standard deviation if the experiments are repeated multiple runs with different random seeds.
Information about other hyper-parameters is given in the Appendix.

\begin{table}[!t]
\vspace{-3em}
\centering
\begin{small}
\caption{The performance of different approaches in safe driving benchmark.
}
\label{tab:main-results}
\resizebox{\textwidth}{!}{%
\begin{tabular}{@{}ccccccc@{}}
\toprule
Category & Method & \shortstack{Total Training \\ Safety Violation} & \shortstack{Training\\Data Usage} & \shortstack {Test\\Return} &  \shortstack{{Test Safety}\\ {Violation}} & \shortstack{Test\\Success Rate}  \\
\toprule
\multirow{1}*{\shortstack{\textit{Expert}}} &
\textit{Human} & -&-	& \textit{358.19 {\tiny $\pm$86.00}} & \textit{{0.16\tiny $\pm$0.61}} & \textit{0.98} \\
\midrule
\multirow{2}*{\shortstack{RL}} 
& SAC-RS &  2.76K {\tiny $\pm$ 0.95K }  &	1M & \textbf{386.77} 	{\tiny $\pm$35.1} &	0.73 	{\tiny $\pm$1.18} &	0.82 	{\tiny $\pm$0.18} \\ 
& PPO-RS & 24.34K {\tiny $\pm$3.56K} & 1M& 335.39 {\tiny $\pm$12.41}  &	3.41 {\tiny $\pm$1.11}  &	0.69{\tiny $\pm$0.08}   \\ \midrule
\multirow{3}*{\shortstack{Safe RL}}
& SAC-Lag & 1.84K {\tiny $\pm$ 0.49K} &	1M & 351.96	{\tiny $\pm$101.88} &	\textbf{0.72} 	{\tiny $\pm$0.49} &	0.73 	{\tiny $\pm$0.29} \\
& PPO-Lag &11.64K {\tiny $\pm$ 4.16K} &	1M & 299.99 	{\tiny $\pm$49.46} &	1.18 {\tiny $\pm$0.83} &	0.51 	{\tiny $\pm$0.17} \\
& CPO &
4.36K  {\tiny $\pm$2.22K}
& 1M & 194.06 {\tiny $\pm$108.86} & 1.71 {\tiny $\pm$1.02} & 0.21 {\tiny $\pm$0.29} \\
\midrule
\multirow{1}*{\shortstack{Offline RL}}
& CQL & -  &	36K&  156.4	{\tiny $\pm$31.94} &6.82	{\tiny $\pm$5.1} &	0.11 	{\tiny $\pm$0.07} \\\midrule
\multirow{2}*{\shortstack{IL}}
& BC  & - & 36K	& 101.63	{\tiny $\pm$16.06} &		1.00 	{\tiny $\pm$0.45} &	0.01 	{\tiny $\pm$0.03} \\
& GAIL &
3.70K {\tiny $\pm$2.43K}&	36K &136.08 	{\tiny $\pm$18.73} &	4.42 	{\tiny $\pm$1.89} &	0.13 	{\tiny $\pm$0.03} \\
\midrule

\multirow{2}*{\shortstack{Human-in-\\the-loop}}
& HG-DAgger  & 38.35 & 50K	& 111.87 &		2.38  &	0.04\\
& IWR & 77.38 & 50K	& 299.78 &3.39  &0.64\\
\midrule
\multirow{1}*{\shortstack{{Ours}}} & \textbf{HACO}	& 
\textbf{30.14} {\tiny $\pm$ 11.36}
& \textbf{30K}\textsuperscript{*} & { 349.25 {\tiny $\pm$ 11.45 }} & 0.79	{\tiny $\pm$ 0.31 } &	{ \textbf{0.83} 	{\tiny $\pm$ 0.04}} \\
\bottomrule
\end{tabular}%
}
\end{small}
\begin{flushleft}
\textsuperscript{*} 
During HACO training, in 8316 {\tiny $\pm$ 497.90} steps out of the total 30K steps the human expert intervenes and overwrites the agent's actions. The whole training takes about 50 minutes. 
\end{flushleft}
\vspace{-1em}
\end{table}

\subsection{Baseline Comparison}

We compare our method to vanilla RL and Safe RL methods which inject the human intention and constraint through the pre-defined reward function and cost function. 
We test native RL methods, PPO~\citep{schulman2017proximal} and SAC~\citep{haarnoja2018soft}, with cost added to the reward as auxiliary negative reward, called reward shaping (RS). 
Three common safe RL baselines Constraint Policy Optimization (CPO)~\citep{achiam2017constrained}, PPO-Lagrangian~\citep{stooke2020responsive}, SAC-Lagrangian~\citep{ha2020learning} are evaluated.

Apart from the RL methods, we also generate a human demonstration dataset containing one-hour expert's demonstrations where there are about 36K transitions in the training environments. For the high-quality demonstrations in the dataset, the success rate of the episodes reaches 98\% and the safety violation is down to 0.16.
Using this dataset, we evaluate passive IL method Behavior Cloning, active IL method GAIL~\citep{ho2016generative} and offline RL method CQL~\citep{kumar2020conservative}.
We also run the Human-Gated DAgger (HG-DAgger)~\citep{kelly2019hg} and Intervention Weighted Regression (IWR)~\citep{mandlekar2020human} as the baselines of human-in-the-loop methods based on this dataset and the human-AI copilot workflow.

\textbf{Training-time Safety.}
The training-time safety is measured by the \textit{total training safety violation}, the total number of critical failures occurring in the training. Note that the environmental cost here is different from the human intervention cost in HACO.
As illustrated in Table~\ref{tab:main-results} and Fig.~\ref{figure:main-results}\textbf{A}, HACO achieves huge success in training time safety.
Apart from the empirical results, we provide proof to show the training safety can be bound by the guardian in Appendix.
Under the protection of the human expert, HACO yields only 30.14 total safety violations in the whole training process, two orders of magnitude better than other RL baselines, even though HACO does not access the environmental cost. 
IWR and HG-DAgger also achieve drastically lower training safety violations, showing the power of human-in-the-loop methods.
The most competitive RL baseline SAC-RS, which achieves similar test success rate, causes averagely 2767.77 training safety violations which are much higher than HACO. The active IL method GAIL also has significantly higher safety violations than HACO and its performance is unsatisfactory.

From the perspective of safety, we find that the reward shaping technique is inferior compared to the Lagrangian method, both for SAC and PPO variants. PPO causes more violations than SAC, probably due to the relatively lower sample efficiency and slower convergence speed.

\textbf{Sample Efficiency and Human Cognitive Cost.}
The human-AI system is not only protected so well by the human, but achieves superior sample efficiency with limited data usage.
As shown in Fig.~\ref{figure:main-results}\textbf{A} and Table~\ref{tab:main-results}, we find that HACO is an order of magnitude more efficient than RL baselines.
HACO achieves 0.83 test success rate by merely interacting with the environment in the 30K steps, wherein only averagely 8,316 steps the human provides safe actions as demonstration.
During nearly 50 minutes of human-AI copilot, there are only 27\% steps that the human provides demonstrations.

Human-in-the-loop baselines IWR and HG-DAgger consume 50K steps of human budget and only IWR can achieve satisfactory success rate.
By prioritizing samples from human intervention, IWR manages to learn key actions from human intervention to escape dangerous situations caused by the compounding error. 
Without re-weighting the human takeover data, HG-Dagger fails to learn from a few but important human demonstrations.
The learning curves of these two methods can be found in the Appendix.

\begin{figure}[!t]
\vspace{-3em}
\centering
\includegraphics[width=\linewidth]{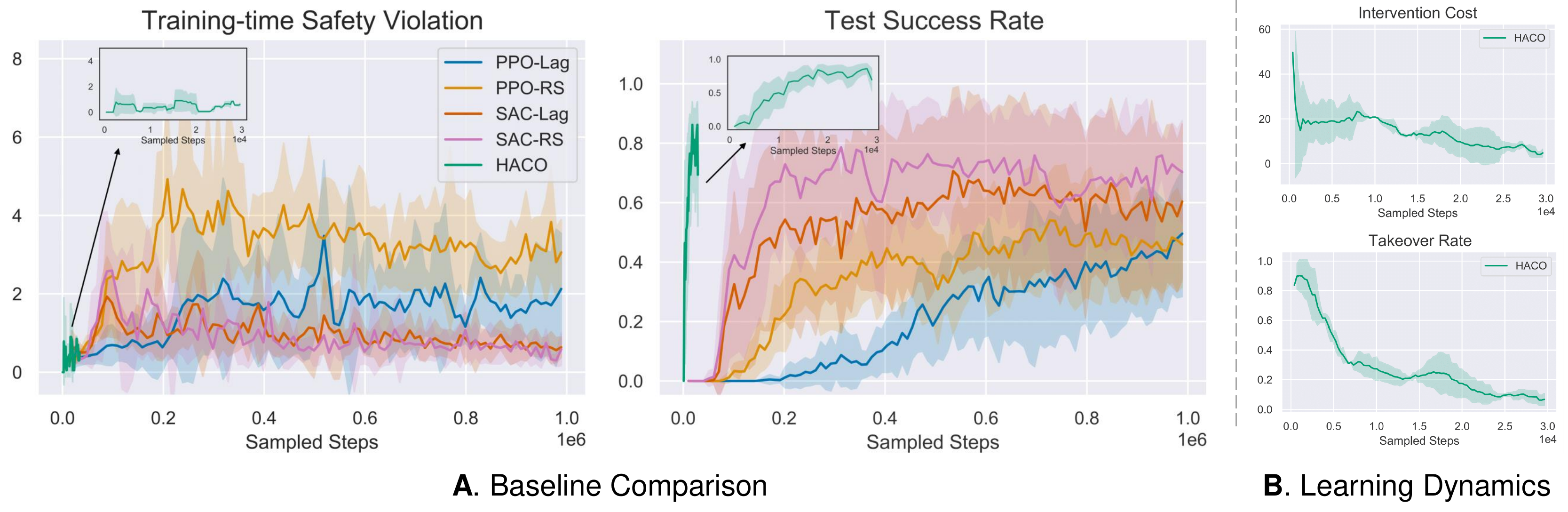}
\caption{
\textbf{A}. The performance of RL methods and HACO. HACO achieves superior training time safety and test success rate.
\textbf{B}. Dynamics of human cognitive cost. The takeover rate is the proportion of the human takeover steps in an episode. In the training process, the takeover rate and the episodic intervention cost gradually reduce and the agent obtains higher level of autonomy. 
}
\label{figure:main-results}
\vspace{-1em}
\end{figure}

Unlike the success of HACO, all the learning-from-demonstration methods fail with the dataset containing 36K transitions.
Compared to IL methods which optimize agents to imitate exact actions at each time step, HACO considers the learning on the trajectory basis. We incentivize the agent to choose an action that can bring potential return in future trajectory, instead of only mimicking the expert’s behaviors at each step. 
On the other hand, HACO gathers expert data in an online manner through human-AI copilot, which better mitigates the distributional shift severe in offline RL methods.

\textbf{Learning Dynamics.}
The intervention minimization mechanism in HACO reduces human cognitive cost.
As shown in Fig.~\ref{figure:main-results}\textbf{B}, the takeover rate gradually decreases in the course of learning.
The curve of episodic intervention cost suggests that the human intervention frequency becomes lower and the similarity between agent's action and human's action increases. 
We also provide visualization of the learned proxy value function in the Appendix, showing that the learning scheme of HACO can effectively encode human preference into the proxy values.

\subsection{Ablation Study}
\label{section:ablation-study}

\textbf{Takeover Policy Analysis.}
We request the human subjects to try two intervention strategies.
The first is to take over in a low frequency and produce a long trajectory at each intervention. 
In this way the intervention cost becomes sparse.
The other strategy is to intervene more frequently and provide fragmented demonstrations.
In Table~\ref{tab:ablation-studies}(a), the experiment shows that the proposed HACO works better with dense human intervention signals.
Agent trained with long trajectories achieves inferior success rate and episodic reward than agents trained with dense intervention signals.

\textbf{Cosine Similarity Cost Function.}
As shown in Table~\ref{tab:ablation-studies}(b), we replace the intervention cost function in Eq.~\ref{equation: cost-function} to a constant value $+1$ if human intervention happens. 
We find the agent learns to stay in the spawn points and does not move at all in test time.
As discussed in Sec.~\ref{section:implementation-details}, it is possible that the human intervenes in incorrect timing. This makes agent fail to identify how to drive correctly.
Using the negative cosine similarity to measure the divergence between agent and human's actions alleviates this phenomenon since the human intervention penalty is down-weighted when the agent provides action that adheres human intention.

\textbf{Intervention Minimization.}
As shown in Table~\ref{tab:ablation-studies}(c), when removing the intervention minimization mechanism, the agent drives directly toward the boundary. This is because the agent learns to abuse human expert to take over all the time, which increases proxy values but causes consistent out-of-the-road failures in testing.
This result shows the importance of intervention minimization.

\begin{table}[!t]
\vspace{-3em}
\centering
\begin{small}
\caption{
The test performance when ablating components in HACO. All experiments here are conducted by human subjects. Some experiments only repeat once due to limited human budget.
}
\label{tab:ablation-studies}
\begin{tabular}{@{}lccc@{}}
\toprule 
Experiment & \shortstack {Test\\Return} &  \shortstack{Test\\Cost} & \shortstack{Test\\Success Rate}  \\
\midrule
\textbf{(a)} Human intervenes less frequently  & 220.03 & 0.867 & 0.397
\\
\textbf{(b)} W/o cosine similarity intervention cost  & 23.23 & 0.42 & 0.00
\\
\textbf{(c)} W/o intervention minimization  & 84.77 & 1.07 & 0.00
\\
\midrule
\textbf{HACO} & { 349.25 {\tiny $\pm$ 11.45 }} & 0.79	{\tiny $\pm$ 0.31 } &	{ 0.83	{\tiny $\pm$ 0.04}} \\
\bottomrule
\end{tabular}%
\end{small}
\vspace{-1em}
\end{table}

\begin{table}[!t]
\centering
\begin{small}
\caption{
Comparison between HACO and PPO. Agents are trained in CARLA town 1 and tested in the unseen CARLA town 2.
}
\label{tab:carla-exp}
\begin{tabular}{@{}ccccc@{}}
\toprule 
Algorithm	& \shortstack{Test Safety Violation} & \shortstack{Test Return} & \shortstack{Test Success Rate} & \shortstack{Train Samples}  \\
\midrule
PPO &	80.84	& 1591.00 &	0.35	& 500,000\\
HACO &	11.84	& 1579.03 &	0.35	& 8,000\\
\bottomrule
\end{tabular}%
\end{small}
\vspace{-1em}
\end{table}

\textbf{CARLA Experiment.}
To test the generality of HACO, we run HACO in the CARLA simulator~\citep{DBLP:journals/corr/abs-1711-03938}.  
We use the top-down semantic view provided by CARLA as the input and a 3-layer CNN as the feature extractor for HACO and the PPO baseline. For PPO, the reward follows the setting described in CARLA and is based on the velocity and the completion of the road.
We train HACO (with a human expert) and PPO in CARLA town 1 and report the test performance in CARLA town 2.
Table~\ref{tab:carla-exp} shows that the proposed HACO can be successfully deployed in the CARLA simulator with visual observation and achieve comparable results. Also, it can train the driving agent with a new CNN feature-extractor in 10 minutes with only 8,000 samples in the environment. 
The video is available at: \url{https://decisionforce.github.io/HACO/}.


\section{Conclusion}

We develop an efficient human-in-the-loop learning method, \textit{\textbf{H}uman-\textbf{A}I \textbf{C}opilot \textbf{O}ptimization (\textbf{HACO})}, which trains agents from the human interventions and partial demonstrations.
The method incorporates the human expert in the interaction between agent and environment to ensure safe and efficient exploration. 
The experiments on safe driving show that the proposed method achieves superior training-time safety, outperforming RL and IL baselines. Besides, it shows a high sample efficiency for rapid learning. The constrained optimization technique is used to prevent the agent from excessively exploiting the human expert, which also decreases the takeover frequency and saves valuable human budget.  

One limitation of this work is that the trained agents behave conservatively compared to the agents from RL baselines. 
Aiming to ensure the training time safety of the copilot system, human expert typically slow the vehicle down to rescue it from risky situations. 
This makes the agent tend to drive slowly and exhibit behaviors such as frequent yielding in the intersection.
In future work, we will explore the possibility of learning more sophisticated skills.

\textbf{Acknowledgments} This project was supported by the 
Centre for Perceptual and Interactive Intelligence (CPII) Ltd under InnoHK supported by the Innovation and Technology Commission.

\section*{Ethics Statement}
The proposed Human-AI Copilot Optimization algorithm aims at developing a new human-friendly human-in-the-loop training framework. We successfully increase the level of automation after human-efficient training. We believe this work has a great positive social impact which advances the development of more intelligent AI systems that costs less human burdens.  

We employ human subjects to participate in the experiments. Human subjects can stop the experiment if any discomfort happens. No human subjects were harmed in the experiments since we test in the driving simulator. The human subjects earn an hourly salary more than average in our community. Each experiment lasts near one hour. Human participants will rest at least three hours after one experiment. During training and data processing, no personal information is revealed in the collected dataset or the trained agents.

%
%

\bibliography{cite}
\bibliographystyle{iclr2022_conference}

\clearpage
\appendix
\section*{Appendix}
The \textbf{demonstrative video} and the \textbf{code} of HACO and baselines are provided at: \url{https://decisionforce.github.io/HACO/}



\section{Main Theorem and the Proof}
\label{sec:appendix-proof}
In this section, we derive the upper bound of the discounted probability of failure of HACO, showing that we can bound the training safety with the guardian.

\begin{theorem}[Upper bound of training risk]
The expected cumulative probability of failure ${V_{\pi_b}}$ of the behavior policy ${\pi_b}$ in HACO is bounded by the error rate of the human expert action $\epsilon$, the error rate of the human expert intervention $\kappa$ and the tolerance of the human expert $K'$:
$$
{V_{\pi_b}}
\le
\cfrac{1}{1 - \gamma}[
\epsilon + \kappa + \cfrac{\gamma\epsilon^2}{1 - \gamma} K'
],
$$
wherein $
K'
= \max_s K(s) = \max_s \mathop{\int}_{a \in {\mathcal A_h}(s)}da \ge 0
$ is called human expert tolerance.
\end{theorem}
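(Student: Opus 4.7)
The plan is to set up a Bellman-style recursion for the discounted cumulative failure probability of $\pi_b$, bound the per-step failure contribution using the mixture structure of $\pi_b$ together with the error rates $\epsilon$ and $\kappa$, and then sum a geometric series in $\gamma$ to pull out the $1/(1-\gamma)$ factor.

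First I would introduce an indicator cost $C(s,a)\in\{0,1\}$ for the safety event of interest and define $V_\pi(s_0) = \mathbb{E}_\pi\!\left[\sum_{t\ge 0}\gamma^t C(s_t,a_t)\right]$, which satisfies the standard recursion $V_\pi(s) = \mathbb{E}_{a\sim\pi(\cdot|s)}[C(s,a)] + \gamma\,\mathbb{E}_{s'}[V_\pi(s')]$. I would also need to make precise the informal quantities in the statement: let $\mathcal{A}_h(s)$ be the set of actions the human would permit at $s$ (so $K(s)=\int_{\mathcal{A}_h(s)} da$ and $K'=\max_s K(s)$), let $\epsilon$ upper-bound $\mathbb{E}_{a\sim\pi_h(\cdot|s)}[C(s,a)]$ uniformly in $s$, and let $\kappa$ upper-bound the probability that the human fails to flag an unsafe agent action via $I(s,a)$.

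Next I would decompose the one-step failure of $\pi_b$ using its mixture form $\pi_b(a|s) = \pi_n(a|s)(1-I(s,a)) + \pi_h(a|s)G(s)$. On the human-controlled branch the immediate failure probability is at most $\epsilon$, and on the agent-controlled branch (where $I(s,a_n)=0$, i.e. $a_n\in\mathcal{A}_h(s)$) the only way a failure slips through is if the human erroneously refrains from intervening, which happens with probability at most $\kappa$. This yields a one-step bound $\mathbb{E}_{a\sim\pi_b(\cdot|s)}[C(s,a)] \le \epsilon + \kappa$. Combining with the recursion gives
\begin{equation*}
V_{\pi_b}(s) \;\le\; \epsilon + \kappa + \gamma\,\mathbb{E}_{s'\sim P(\cdot|s,\pi_b)}[V_{\pi_b}(s')].
\end{equation*}

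The remaining, and most delicate, task is the $\gamma\epsilon^2 K'/(1-\gamma)$ correction. The plan is to bound $V_{\pi_b}(s')$ not directly by its recursive upper bound (which would only give the first two terms) but by comparing to $V_{\pi_h}(s')$ via a simulation/performance-difference argument: the agent's law on the permitted set $\mathcal{A}_h(s')$ can deviate from $\pi_h$'s conditional law there by up to the measure $K(s')\le K'$ in total variation, and each subsequent step contributes another factor of $\epsilon$ through the human-cost bound, while future costs are controlled by $V_{\pi_h}\le \epsilon/(1-\gamma)$. Unrolling once along the agent branch and applying the standard geometric summation $\sum_{t\ge 0}\gamma^t = 1/(1-\gamma)$ twice (once for the $\epsilon+\kappa$ leading term and once for the propagated $\gamma \epsilon^2 K'$ correction) should produce precisely the claimed bound.

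The main obstacle will be the quadratic term $\epsilon^2 K'$: extracting it cleanly requires the two-layer argument above, with care to avoid double-counting the human-error $\epsilon$ across the immediate step and the propagated term, and to verify that the tolerance $K'$ really controls the relevant total-variation gap between $\pi_n|_{\mathcal{A}_h}$ and $\pi_h|_{\mathcal{A}_h}$. I would spend the bulk of the proof on this step; the outer geometric summation and the initial Bellman decomposition are routine once the per-step inequality is established.
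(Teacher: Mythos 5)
Your opening route is genuinely different from the paper's and, as far as it goes, it is sound: the paper does not use a Bellman recursion on $V_{\pi_b}$ at all, but instead invokes the Kakade--Langford performance difference lemma with the human policy as baseline, $V_{\pi_b}=V_{\pi_h}+\frac{1}{1-\gamma}\E_{s\sim P_{\pi_b}}\E_{a\sim\pi_b}[A_{\pi_h}(s,a)]$, bounds $V_{\pi_h}\le\epsilon/(1-\gamma)$, and then bounds the advantage term by splitting $\pi_b$ into its human part (zero advantage) and its agent part on $\mathcal A_h(s)$. Your per-step bound $\E_{a\sim\pi_b(\cdot|s)}[C^{gt}(s,a)]\le\epsilon+\kappa$ follows from exactly the same splitting (the human branch contributes at most $G(s)\epsilon\le\epsilon$ by Assumption 1, the agent branch at most $\kappa$ by Assumption 2), and unrolling your recursion immediately gives $V_{\pi_b}\le(\epsilon+\kappa)/(1-\gamma)$. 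Since $\gamma\epsilon^2K'/(1-\gamma)\ge 0$, this is a \emph{tighter} bound that already implies the theorem. You should stop there: the entire third term is non-negative slack, and the part of the proof you flag as "the bulk of the work" is unnecessary. (One caveat you share with the paper: bounding $\int_{\mathcal A_h(s)}\pi_n(a|s)C^{gt}(s,a)\,da$ by $\int_{\mathcal A_h(s)}C^{gt}(s,a)\,da\le\kappa$ silently assumes the density satisfies $\pi_n(a|s)\le 1$, since $\kappa$ bounds a Lebesgue measure, not a probability under $\pi_n$.)

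The genuine gap is in your plan for manufacturing the $\gamma\epsilon^2K'/(1-\gamma)$ term. The quantity $K(s)=\int_{\mathcal A_h(s)}da$ is simply the Lebesgue measure of the human-permitted action set; it does not control, and has no relation to, the total-variation distance between $\pi_n$ and $\pi_h$ restricted to $\mathcal A_h(s)$ -- nothing in Assumptions 1 or 2 constrains how far the agent's conditional law may deviate from the human's on the permitted set, so that comparison argument cannot be closed. In the paper the $K'$ term arises purely as loose bookkeeping: the term $\gamma\int_{\mathcal A_h(s)}\pi_n(a|s)V_{\pi_h}(s')\,da$ is upper-bounded by $\frac{\gamma\epsilon}{1-\gamma}K(s)$ after dropping the density, while the compensating negative term $-\int_{\mathcal A_h(s)}\pi_n(a|s)V_{\pi_h}(s)\,da$ is discarded as non-positive; there is no "second layer" of error propagation producing a genuine $\epsilon^2$. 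Indeed the paper's own penultimate line has $\frac{\gamma\epsilon}{1-\gamma}K'$ inside the bracket, so the $\epsilon^2$ in the theorem statement appears to be a typographical artifact -- trying to derive exactly that exponent by a delicate two-layer argument is chasing something that is not there. Replace your third and fourth paragraphs with the observation that the extra term is non-negative and your already-established bound suffices.
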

The human expert tolerance $K'$ will becomes larger, if human relieves its intervention and allows the agent to explore the environment more freely.

The proof is given as follows.

\textit{\textbf{Notations.}}
Before starting, we firstly recap and describe the notations. In HACO, a human subject copilots with the learning agent. 
The agent's policy is $\pi_n$, the human's policy is $\pi_h$. Both policies produces action in the bounded action space $\mathcal A \in \mathbb R^{|\mathcal A|}$.
The human expert decides to intervention under certain state and agent's action $a_n$. The human intervention is denoted by a Boolean function: $I(s, a)$.
The mixed behavior policy $\pi_b$ that produces the real actions applied to the environment is denoted as:
\begin{equation}
  \pi_b(a|s) 
  = 
  {\pi_n}(a|s) (1 - I(s,a))
  + 
  {\pi_h}(a|s) G(s),
\end{equation}
wherein $G(s) = \int_{a'\in \mathcal A} I(s, a') \pi_n(a' |s) da'$ is a function which denotes the probability of choosing an action that will be rejected by the human.

Therefore, at a given state, we can split the action space into two parts: where intervention will happen or will not happen if the agent sample action in it. 
We denote the \textit{confident action space} as:
\begin{equation}
{\mathcal A_h}(s) = \{a: I(a|s) \text{ is False}\}.
\end{equation}
The confident action space contains the actions that will not be rejected by human expert at state $s$.

We also define the ground-truth indicator $C^{gt}$ denoting whether the action will lead to unsafe state. This unsafe state is determined by the environment and is not revealed to learning algorithm:
\begin{equation}
C^{gt}(s, a) = 
\begin{cases}
  1, & \text{if }s' = \mathcal P(s'|s, a) \text{ is an unsafe state,}
  \\
  0, & \text{otherwise.}
\end{cases}
\end{equation}

Therefore, at a given state $s$ the \textit{step-wise probability of failure} for arbitrary policy $\pi$ is:
\begin{equation}
	\expect_{a\sim \pi(\cdot|s)} C^{gt}(s, a) \in [0, 1].
\end{equation}

Now we denote the \textit{cumulative discounted probability of failure} as:
\begin{equation}
V_{\pi}(s_t) = \expect_{\tau\sim \pi} \sum_{t'=t}\gamma^{t'-t} C^{gt}(s_{t'}, a_{t'}),
\end{equation}
which counts for the chance of entering dangerous states in current time step as well as in future trajectories deduced by the policy $\pi$.
We use $V_{{\pi_h}} = \expect_{\tau\sim {\pi_h}} V_{{\pi_h}}(s_0)$ to denote the expected cumulative discounted probability of failure of the human.
Following the same definition as $V_{{\pi_h}}$, we can also write the expected cumulative discounted probability of failure of the behavior policy as: ${V_{\pi_b}} = \expect_{\tau\sim {\pi_b}} {V_{\pi_b}}(s_0)=\expect_{{\pi_b}} \sum_{t=0}\gamma^{t} C^{gt}(s_{t}, a_{t})$.

\textit{\textbf{Assumption.}}
Now we introduce two important assumptions on the human expert.

\begin{assumption}[Error rate of human action]
\label{assumption:expert-failure-rate-action}
	For all states, the step-wise probability of that the human expert produces an unsafe action is bounded by a small value $\epsilon < 1$:
\begin{equation}
	\expect_{a\sim {\pi_h}(\cdot|s)} C^{gt}(s, a) \le \epsilon  .
\end{equation}
\end{assumption}

\begin{assumption}[Error rate of human intervention]
\label{assumption:expert-failure-rate-intervention}
	For all states, the step-wise probability of that the human expert does not intervene when agent produces an unsafe action is bounded by a small value $\kappa < 1$:
\begin{equation}
  \mathop{\int}_{a \in {\mathcal A}} [1 - I(s, a)] C^{gt}(s, a)da  =  \mathop{\int}_{a \in {\mathcal A_h}(s)} C^{gt}(s, a)da \le \kappa.
\end{equation}
\end{assumption}

These two assumptions does not impose any constrain on the structure of the human expert policy.

\textit{\textbf{Lemmas.}}
We propose several useful lemmas and the correspondent proofs, which are used in the main theorem.
\begin{lemma}[The performance difference lemma]
\label{lemma:performance-difference-lemma}
\begin{equation}
{V_{\pi_b}} 
= 
V_{{\pi_h}} + 
\cfrac{1}{1-\gamma}
\ \expect_{s\sim P_{{\pi_b}}} \ 
\expect_{a\sim {\pi_b}} \ 
[ {A_{\pi_h}}(s, a) ].
\end{equation}
\end{lemma}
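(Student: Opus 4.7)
The plan is to adapt the classical performance difference argument of Kakade and Langford to the cost-side setting used here, where $V_\pi$ accumulates the ground-truth failure indicator $C^{gt}$ rather than a reward. The natural object to introduce is the cost-side advantage $A_{\pi_h}(s,a) = C^{gt}(s,a) + \gamma \expect_{s'\sim \mathcal P(\cdot|s,a)} V_{\pi_h}(s') - V_{\pi_h}(s)$, constructed so that its conditional expectation under $\pi_h$ vanishes at every state but can be nonzero when taken under a different action distribution such as $\pi_b$.

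Next I would sample a full trajectory $\tau = (s_0,a_0,s_1,a_1,\ldots)$ from the mixed behavior policy $\pi_b$ and analyze $\expect_{\tau\sim\pi_b}\sum_{t=0}^{\infty} \gamma^t A_{\pi_h}(s_t,a_t)$. Substituting the definition of $A_{\pi_h}$, the $C^{gt}$ pieces collect into $V_{\pi_b}$ by the definition given just before the lemma. The remaining $\gamma V_{\pi_h}(s_{t+1}) - V_{\pi_h}(s_t)$ terms form a telescoping series under the tower property over $\mathcal P$, and after cancellation only $-V_{\pi_h}(s_0)$ survives, whose expectation over the initial state is $-V_{\pi_h}$. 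This chain gives $\expect_{\tau\sim\pi_b}\sum_t \gamma^t A_{\pi_h}(s_t,a_t) = V_{\pi_b} - V_{\pi_h}$, which is already the performance-difference identity in trajectory form.

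The final step converts the trajectory sum into the state-distribution form that appears in the lemma. Using $\sum_{t=0}^{\infty}\gamma^t \Pr_{\pi_b}(s_t = s) = \tfrac{1}{1-\gamma} P_{\pi_b}(s)$, where $P_{\pi_b}$ denotes the normalized discounted state-occupancy induced by $\pi_b$, and Fubini on the absolutely convergent sum (cost is bounded in $[0,1]$, so everything is dominated by a geometric series), I can swap the sum and expectation and pull out the $\tfrac{1}{1-\gamma}$ factor. Rearranging yields the claimed identity.

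The main obstacle is not technical difficulty but bookkeeping on two points. First, I need to make explicit that the telescoping argument is completely oblivious to the internal structure of $\pi_b$, that is, the indicator $I$, the term $G(s)$, and the mixture form of $\pi_b$ play no role because the derivation only uses that $\pi_b$ is the law generating the trajectory together with the Markov property of $\mathcal P$. Second, I need to be careful that the $s\sim P_{\pi_b}$ in the statement is read as the discounted state-occupancy distribution rather than a stationary distribution, since the factor $\tfrac{1}{1-\gamma}$ is exactly the normalizing constant of that occupancy measure; getting this interpretation right is what makes the final line an equality rather than an inequality.
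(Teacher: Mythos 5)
Your argument is correct and is exactly the standard Kakade--Langford telescoping proof, adapted to the cost-side value function; the paper itself does not reprove the lemma but simply cites \citet{kakade2002approximately} and notes the substitution of $C^{gt}$ for the reward, which is precisely the adaptation you carry out. Your two bookkeeping remarks (that the mixture structure of $\pi_b$ is irrelevant to the telescoping, and that $P_{\pi_b}$ must be read as the normalized discounted occupancy so that $\tfrac{1}{1-\gamma}$ is its normalizer) are both accurate and consistent with how the paper uses the lemma downstream.
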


Here the $P_{{\pi_b}}$ means the states are subject to the marginal state distribution deduced by the behavior policy ${\pi_b}$. ${A_{\pi_h}}(s, a)$ is the advantage of the expert in current state action pair: ${A_{\pi_h}}(s, a) = C^{gt}(s, a) + \gamma V_{{\pi_h}}(s') - V_{{\pi_h}}(s)$ and $s' = \mathcal P(s, a)$ is the next state.
This lemma is proposed and proved by \citet{kakade2002approximately} and is useful to show the behavior policy's safety.
In the original proposition, the $V$ and $A$ represents the expected discounted return and advantage w.r.t. the reward, respectively. However, we replace the reward with the indicator $C^{gt}$ so that the value function ${V_{\pi_b}}$ and $V_{{\pi_h}}$ presenting the expected cumulative failure probability.

%
%

\begin{lemma}
\label{lemma:cumulative-probability-of-failure-of-the-expert}
	The cumulative probability of failure of the expert $V_{{\pi_h}}(s)$ is bounded for all state:
	$$
	V_{{\pi_h}}(s) \le \cfrac{\epsilon}{1 - \gamma}
	$$
\end{lemma}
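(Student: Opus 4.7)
The plan is to unroll the definition of $V_{\pi_h}(s)$ as an infinite discounted sum of expected step-wise failure indicators, apply Assumption~\ref{assumption:expert-failure-rate-action} termwise, and then sum the resulting geometric series. Concretely, starting from $V_{\pi_h}(s) = \mathbb{E}_{\tau \sim \pi_h}\bigl[\sum_{t=0}^{\infty} \gamma^{t} C^{gt}(s_t, a_t) \,\big|\, s_0 = s\bigr]$, I would swap the expectation and the (absolutely convergent, since $0 \le C^{gt}\le 1$) sum by Fubini/monotone convergence, obtaining $V_{\pi_h}(s) = \sum_{t=0}^{\infty} \gamma^{t}\, \mathbb{E}_{\tau\sim\pi_h}[C^{gt}(s_t,a_t) \mid s_0 = s]$.

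Next, for each fixed $t$, I would condition on the state $s_t$ reached at time $t$ under the expert policy and use the tower property:
\begin{equation*}
\mathbb{E}_{\tau\sim\pi_h}[C^{gt}(s_t,a_t)\mid s_0=s] = \mathbb{E}_{s_t}\bigl[\mathbb{E}_{a_t\sim\pi_h(\cdot\mid s_t)} C^{gt}(s_t,a_t)\bigr] \le \mathbb{E}_{s_t}[\epsilon] = \epsilon,
\end{equation*}
where the inequality is exactly Assumption~\ref{assumption:expert-failure-rate-action} applied at the (random) state $s_t$. Summing the geometric series then yields $V_{\pi_h}(s) \le \sum_{t=0}^{\infty}\gamma^{t}\epsilon = \epsilon/(1-\gamma)$, which is the desired bound, uniformly in $s$.

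There is no real obstacle here; the only mild care points are (i) justifying the swap of sum and expectation (immediate from $C^{gt}\in[0,1]$ and $\gamma\in[0,1)$, so the series is dominated by the convergent series $\sum\gamma^t$), and (ii) being explicit that Assumption~\ref{assumption:expert-failure-rate-action} holds pointwise in the conditioning state, so the bound $\epsilon$ passes through the outer expectation. The result is a uniform-in-$s$ bound, which is what the main theorem will need when it applies the performance difference lemma (Lemma~\ref{lemma:performance-difference-lemma}) to control $V_{\pi_b}$ via $V_{\pi_h}$ and the advantage term.
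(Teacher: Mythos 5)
Your proposal is correct and follows essentially the same route as the paper's proof: exchange the expectation with the discounted sum, bound each term by $\epsilon$ via Assumption~\ref{assumption:expert-failure-rate-action}, and sum the geometric series. The extra care you take with Fubini and the tower property is implicit in the paper's one-line computation but does not change the argument.
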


\begin{proof}
Following Assumption~\ref{assumption:expert-failure-rate-action}:
\begin{equation}
	V_{{\pi_h}}(s_t) =   
	\expect_{{\pi_h}}[
		\sum_{t' = t}^{\infty} \gamma^{t' - t}C^{gt}(s_{t'}, a_{t'})
	]
	=
	\sum_{t' = t}^{\infty}
	\gamma^{t' - t}
	\expect_{{\pi_h}}[
		 C^{gt}(s_{t'}, a_{t'})
	]
	\le
	\sum_{t' = t}^{\infty}
	\gamma^{t' - t}
	\epsilon
	=
	\cfrac{\epsilon}{1 - \gamma}
\end{equation}
\end{proof}

\textit{\textbf{Theorem.}}
We introduce the main theorem of this work above, which shows that the training safety is related to the error rate on action $\epsilon$ and the error rate on intervention $\kappa$ of the human expert.
The proof is given as follows.

\begin{proof}

We firstly decompose the advantage by splitting the behavior policy:
\begin{equation}
\begin{aligned}
  & \expect_{a\sim {\pi_b}(\cdot|s)} {A_{\pi_h}}(s, a)
   =  \int_{a \in \mathcal A} \pi_b(a| s) {A_{\pi_h}}(s, a) \\
  & = \int_{a \in \mathcal A} \{
  {\pi_n}(a|s) (1 - I(s,a)){A_{\pi_h}}(s, a)
  + 
  {\pi_h}(a|s) G(s){A_{\pi_h}}(s, a) \}da
  \\
  & = \int_{a \in \mathcal A_h(s)} [{\pi_n}(a|s) {A_{\pi_h}}(s, a) ]da + G(s)\expect_{a\sim{\pi_h}}[ {A_{\pi_h}}(s, a)].
\end{aligned}
\end{equation}

The second term is equal to zero according to the definition of advantage.
We only need to compute the first term.
We expand the advantage into detailed form, we have:
\begin{equation}
\begin{aligned}
& \expect_{a\sim {\pi_b}(\cdot|s)} {A_{\pi_h}}(s, a) 
 =   \mathop{\int}_{a \in {\mathcal A_h}(s)} [{\pi_n}(a|s) {A_{\pi_h}}(s, a)]da ~\\
& = \mathop{\int}_{a \in {\mathcal A_h}(s)} {\pi_n}(a|s) [
	C^{gt}(s, a) + \gamma V_{{\pi_h}}(s') - V_{{\pi_h}}(s)
	] da ~\\
& =
\underbrace{
\mathop{\int}_{a \in {\mathcal A_h}(s)} {\pi}(a|s) C^{gt}(s, a)da
}_\text{(a)}
+
\underbrace{
\gamma \mathop{\int}_{a \in {\mathcal A_h}(s)} {\pi}(a|s) V_{{\pi_h}}(s') da
}_\text{(b)}
-
\underbrace{
\mathop{\int}_{a \in {\mathcal A_h}(s)} {\pi}(a|s) V_{{\pi_h}}(s) da
}_\text{(c)}
.
\end{aligned}
\end{equation}

Following the Assumption \ref{assumption:expert-failure-rate-action}, the term (a) can be bounded as:

\begin{equation}
  \mathop{\int}_{a \in {\mathcal A_h}(s)} {\pi}(a|s) C^{gt}(s, a)da
  \le
  \mathop{\int}_{a \in {\mathcal A_h}(s)} C^{gt}(s, a)da
  \le
  \kappa.
\end{equation}

Following the Lemma~\ref{lemma:cumulative-probability-of-failure-of-the-expert}, the term (b) can be written as:

\begin{equation}
   \gamma \mathop{\int}_{a \in {\mathcal A_h}(s)} {\pi}(a|s) V_{{\pi_h}}(s') da
  \le
  \gamma \mathop{\int}_{a \in {\mathcal A_h}(s)} V_{{\pi_h}}(s') da
  \le
  \cfrac{\gamma\epsilon}{1 - \gamma} \mathop{\int}_{a \in {\mathcal A_h}(s)} da
  =
  \cfrac{\gamma \epsilon}{1 - \gamma} K(s),
\end{equation}
wherein $K(s) = \mathop{\int}_{a \in {\mathcal A_h}(s)}da$ denoting the area of human-preferable region in the action space. It is a function related to the human expert and state.

The term (c) is always non-negative, so after applying the minus to term (c) the negative term will always be $\le 0$.

Aggregating the upper bounds of three terms, we have the bound on the advantage:

\begin{equation}
\label{equation:bound-of-advantage}
\expect_{a\sim {\pi_b}} {A_{\pi_h}}(s, a) 
\le
\kappa
+
\cfrac{\gamma \epsilon}{1 - \gamma} K(s)
\end{equation}

Now we put Eq.~\ref{equation:bound-of-advantage} as well as Lemma~\ref{lemma:cumulative-probability-of-failure-of-the-expert} into the performance difference lemma (Lemma~\ref{lemma:performance-difference-lemma}), we have:

\begin{equation}
\begin{aligned}
{V_{\pi_b}} 
& = 
V_{{\pi_h}} + 
\cfrac{1}{1-\gamma}
\ \expect_{s\sim P_{{\pi_b}}} \ 
\expect_{a\sim {\pi_b}} \ 
[ {A_{\pi_h}}(s, a) ]
~\\
& \le
\cfrac{\epsilon}{1 - \gamma}
+
\cfrac{1}{1 - \gamma}
[
\kappa
+
\cfrac{\gamma \epsilon}{1 - \gamma} \max_s K(s)]
] ~\\
& =
\cfrac{1}{1 - \gamma}[
\epsilon + \kappa + \cfrac{\gamma\epsilon^2}{1 - \gamma} K'
],
\end{aligned}
\end{equation}
wherein $K' = \max_s K(s) = \max_s \mathop{\int}_{a \in {\mathcal A_h}(s)}da \ge 0$ is correlated to the tolerance of the expert. If the human expert has higher tolerance then $K'$ should be greater.

Now we have proved the upper bound of the discounted probability of failure for the behavior policy in our method.

\end{proof}

\section{Visualization of Learned Proxy Value Function}
\label{sec:appendix-vis-proxy-q}
\begin{figure}[H]
\centering
\vspace{-2em}
\includegraphics[width=\textwidth]{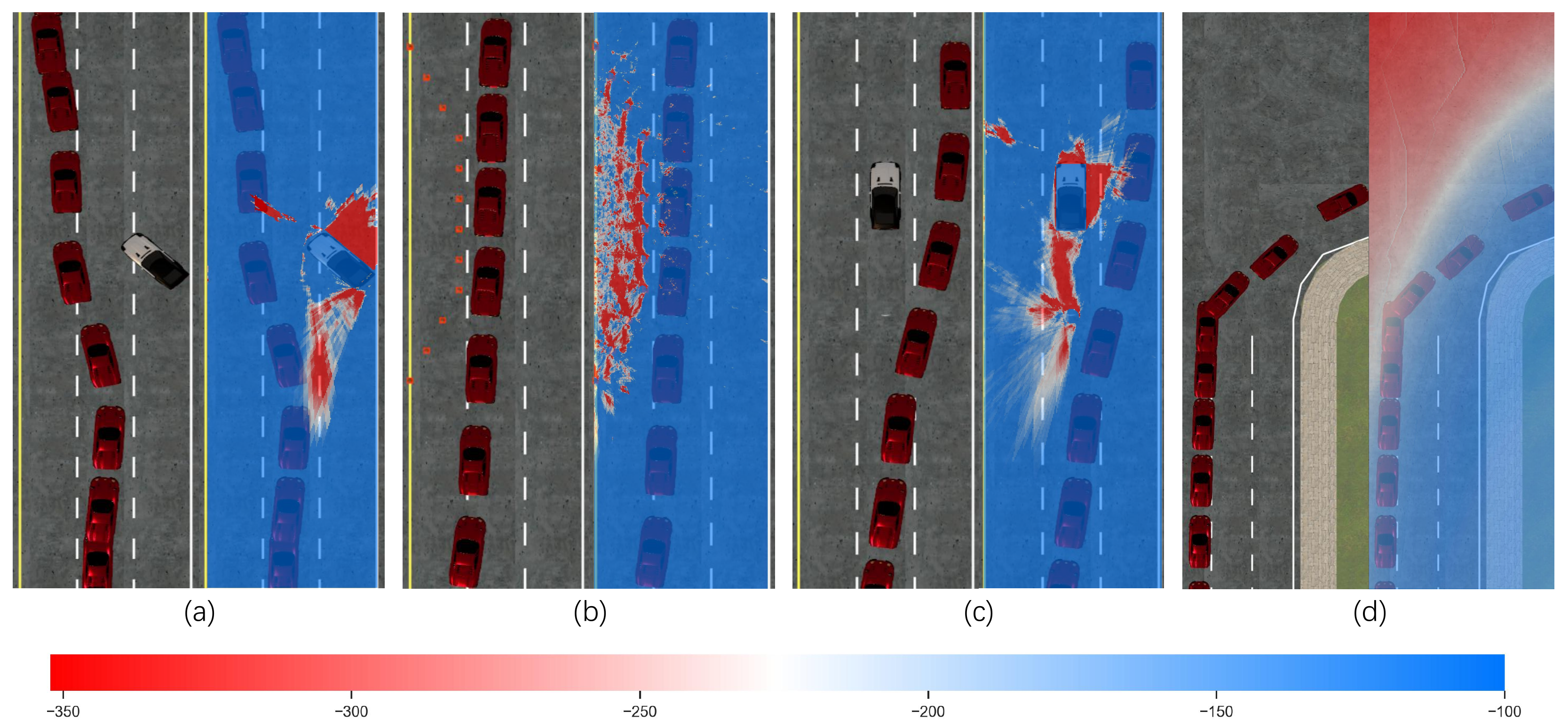}
\hfill
\caption{
Visualization of proxy Q value learned by HACO. 
}
\label{fig:vis_proxy_value}
\vspace{-2em}
\end{figure}
To understand how well the proxy value function learns, we visualize 4 common scenarios in 4 pairs of figures as shown above.
The left sub-figure of each pair shows a top-down view of a driving scenario, where a sequence of snapshots of the control vehicle is plotted, showing its trajectory. The right sub-figure of each pair overlaps the heatmap of proxy values in the top-down image.
We manually position the vehicle in different location in the map and query the policy to get action and run the proxy Q function to get the value $Q(s, a)$.
Region in red color indicates the proxy value is low if the agent locates there and vice versa.

In Fig.~\ref{fig:vis_proxy_value}(a), the agent performs a lane change behavior to avoid potential collisions with a traffic vehicle which is merging into the middle lane. 
The region near the traffic vehicle has extremely low values and thus the agent has small probability to enter this area.

In Fig.~\ref{fig:vis_proxy_value}(b), traffic cones spread in the left lane. The agent learns to avoid crashes and the proxy value heatmap shows a large region of low values. 

As shown in the trajectory in Fig.~\ref{fig:vis_proxy_value}(c), though the agent can choose to bypass the traffic vehicle in both left-hand side or right-hand side, it chooses the right-hand side.
The heatmap shows that much higher proxy Q value is produced on right bypassing path compared to left path.
This behavior resembles the preference of human who prefers right-hand side detour.

In addition, in some ares where paths boundary is ambiguous such as the intersection, 
the agent manages to learn a virtual boundary in the proxy Q space for efficiently passing these areas, as shown in the Fig.~\ref{fig:vis_proxy_value}(d).  

The proxy Q value distribution shown in this section not only explains the avoidance behaviors, but also serves as a good indicator for the learned human preference.

\section{
Details of Human-in-the-loop Baselines
}

\begin{figure}[H]
\centering
\includegraphics[width=0.45\textwidth]{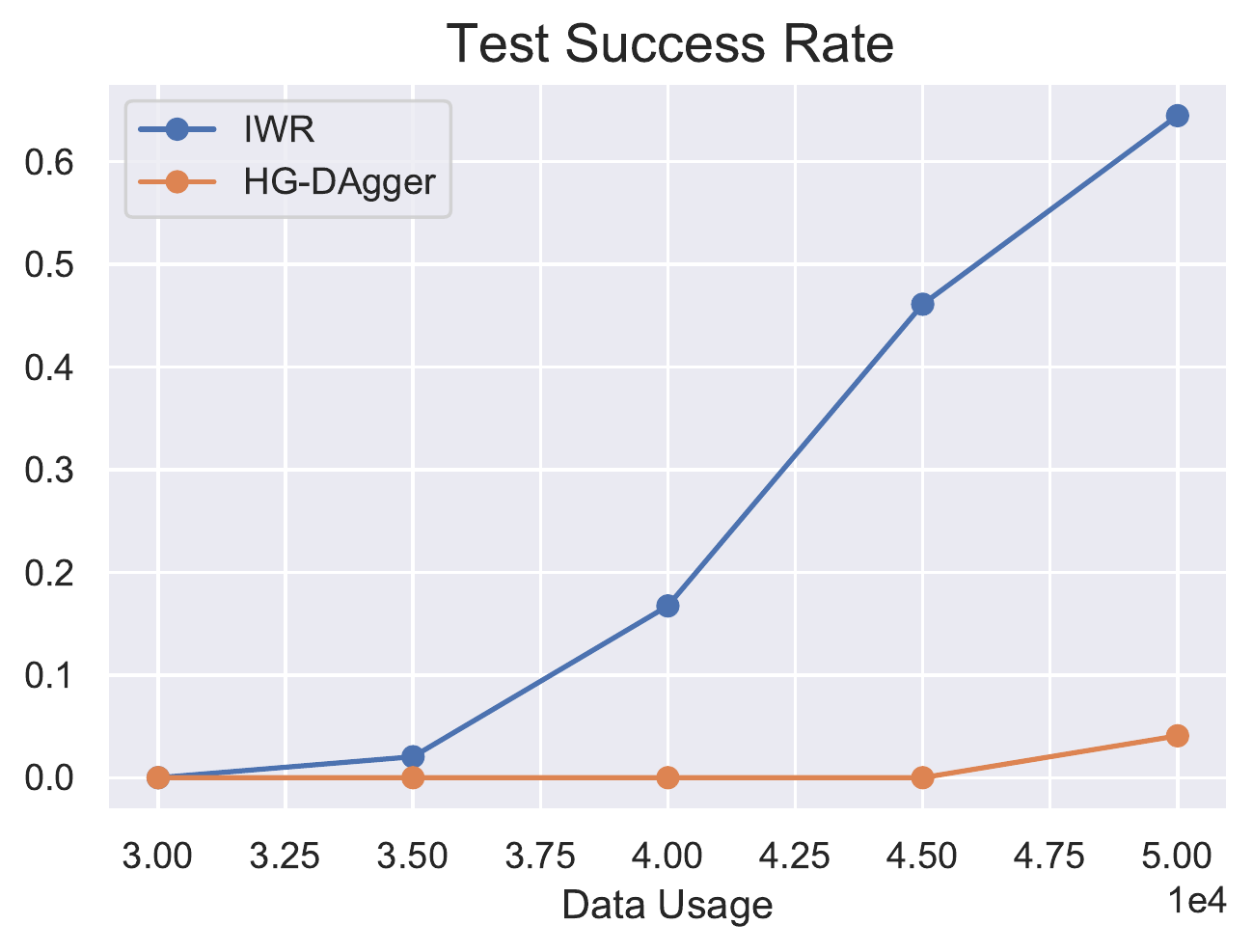}
\caption{
Detailed learning curves of human-in-the-loop baselines.
}
\end{figure}

We benchmark the performance of two human-in-the-loop methods HG-DAgger~\citep{kelly2019hg} and IWR~\citep{mandlekar2020human}.
Both methods require warming up through behavior cloning on a pre-collected dataset.
In practice, we find that using 10K or 20K steps of human collected data is not enough to initialize the policy with basic driving skills.
Therefore, we use the pre-collected human dataset containing 30K transitions to warm up the policies.
After warming up, HG-DAgger and IWR then aggregate human intervention data to the training buffer and conduct behavior cloning again to update policy for 4 epochs. In each epoch the human-AI system collects 5000 transitions.
The above figure shows the learning curves of IWR and HG-DAgger. As discussed in the main body of paper, we credit the success of IWR to the re-weighting of human intervention data, which is not emphasized in HG-DAgger.

\section{
More Zoom-in Plot of the Learning Curves
}

\begin{figure}[H]
\centering
\begin{subfigure}
\centering
\includegraphics[width=0.45\textwidth]{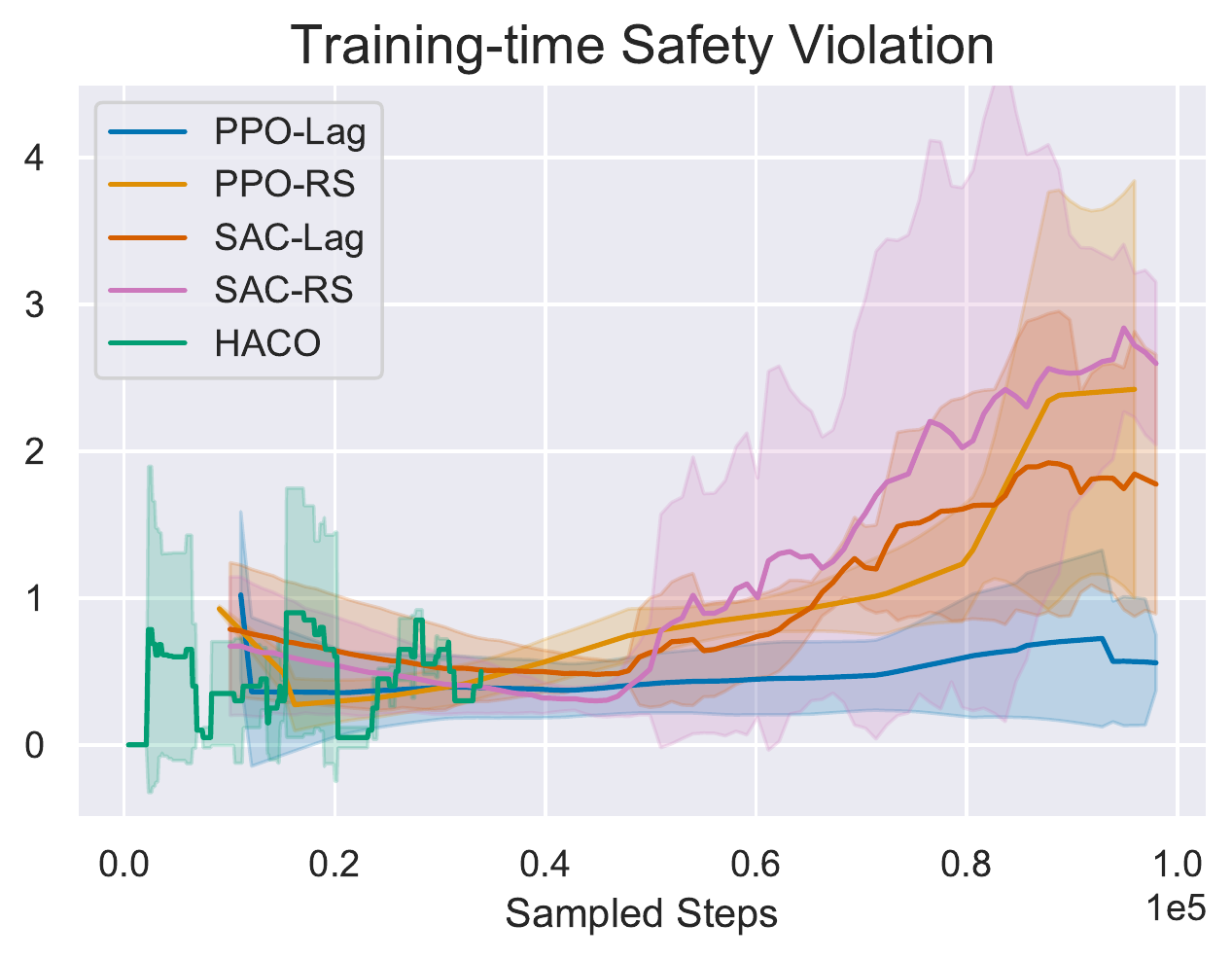}
\end{subfigure}
\begin{subfigure}
\centering
\includegraphics[width=0.45\textwidth]{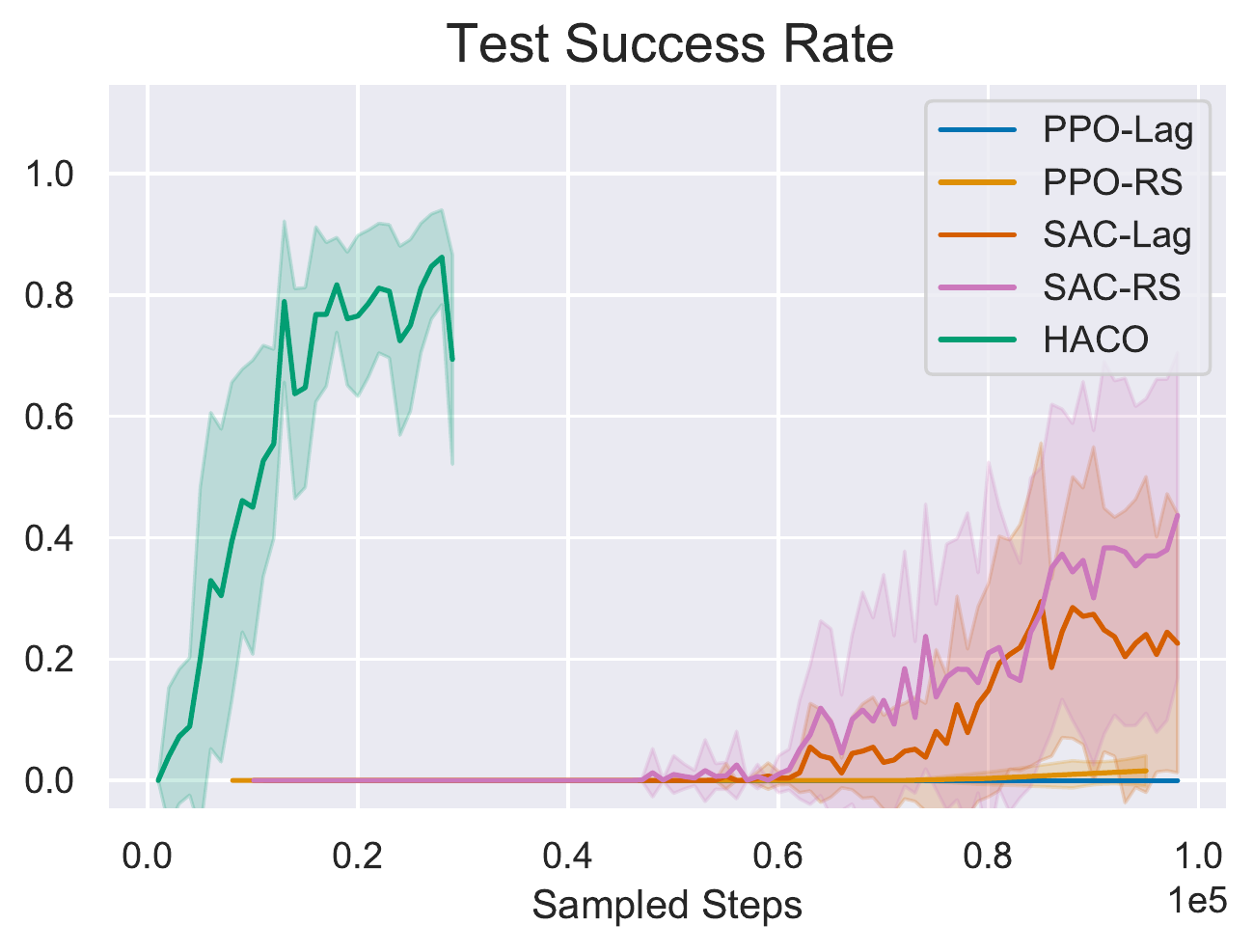}
\end{subfigure}
\caption{The zoomed in learning curves of RL baselines and HACO.}
\end{figure}

The above figures present the zoomed in learning curves of RL baselines and HACO, showing the superior sample efficiency of HACO compared to RL baselines.

\section{Hyper-parameters}

\begin{table}[H]
\begin{minipage}{0.45\linewidth}
\centering
\caption{HACO}
\begin{tabular}{@{}ll@{}}
\toprule
Hyper-parameter             & Value  \\ \midrule
Discounted Factor $\gamma$   & 0.99  \\
$\tau$ for Target Network Update & 0.005 \\
Learning Rate               & 0.0001 \\ 
Environmental Horizon $T$ & 1000 \\
Steps before Learning Start & 100\\
Steps per Iteration & 100 \\
Train Batch Size & 1024  \\
CQL Loss Temperature & 10.0 \\
Target Entropy & 2.0\\ 
\bottomrule
\end{tabular}
\end{minipage}\hfill
\begin{minipage}{0.45\linewidth}
\centering
\caption{PPO/PPO-Lag}
\begin{tabular}{@{}ll@{}}
\toprule
Hyper-parameter             & Value  \\ \midrule
KL Coefficient              & 0.2    \\
$\lambda$ for GAE~\citep{schulman2018highdimensional} & 0.95 \\
Discounted Factor $\gamma$   & 0.99  \\
Number of SGD epochs   & 20     \\
Train Batch Size & 4000 \\
SGD mini batch size & 100 \\
Learning Rate               & 0.00005 \\ 
Clip Parameter $\epsilon$ & 0.2 \\
\midrule
Cost Limit for PPO-Lag & 1 \\
\bottomrule
\end{tabular}
\end{minipage}
\end{table}

\begin{table}[H]
\begin{minipage}{0.45\linewidth}
\centering
\caption{SAC/SAC-Lag/CQL}
\begin{tabular}{@{}ll@{}}
\toprule
Hyper-parameter             & Value  \\ \midrule
Discounted Factor $\gamma$   & 0.99  \\
$\tau$ for target network update & 0.005 \\
Learning Rate               & 0.0001 \\ 
Environmental horizon $T$ & 1500 \\
Steps before Learning start & 10000\\
\midrule
Cost Limit for SAC-Lag & 1 \\
\midrule
BC iterations for CQL & 200000 \\
CQL Loss Temperature $\beta$ & 5 \\
Min Q Weight Multiplier & 0.2 \\
\bottomrule
\end{tabular}
\end{minipage}\hfill
\begin{minipage}{0.45\linewidth}
\centering
\caption{BC}
\begin{tabular}{@{}ll@{}}
\toprule
Hyper-parameter             & Value  \\ \midrule
Dataset Size & 36,000 \\
SGD Batch Size & 32 \\
SGD Epoch & 200000 \\
Learning Rate & 0.0001\\
\bottomrule
\end{tabular}
\end{minipage}
\end{table}

\begin{table}[H]
\begin{minipage}{0.45\linewidth}
\centering
\caption{CPO}
\begin{tabular}{@{}ll@{}}
\toprule
Hyper-parameter             & Value  \\ \midrule
KL Coefficient              & 0.2    \\
$\lambda$ for GAE~\citep{schulman2018highdimensional} & 0.95 \\
Discounted Factor $\gamma$   & 0.99  \\
Number of SGD epochs   & 20     \\
Train Batch Size & 8000 \\
SGD mini batch size & 100 \\
Learning Rate               & 0.00005 \\ 
Clip Parameter $\epsilon$ & 0.2 \\
\midrule
Cost Limit & 1 \\
\bottomrule
\end{tabular}
\end{minipage}\hfill
\begin{minipage}{0.45\linewidth}
\centering
\caption{GAIL}
\begin{tabular}{@{}ll@{}}
\toprule
Hyper-parameter             & Value  \\ \midrule
Dataset Size & 36,000 \\
SGD Batch Size & 64 \\
Sample Batch Size &  12800 \\
Generator Learning Rate & 0.0001 \\
Discriminator Learning Rate & 0.005 \\
Generator Optimization Epoch & 5 \\
Discriminator Optimization Epoch & 2000 \\
Clip Parameter $\epsilon$ & 0.2 \\
\bottomrule
\end{tabular}
\end{minipage}
\end{table}

\begin{table}[H]
\begin{minipage}{0.45\linewidth}
\centering
\caption{HG-DAgger}
\begin{tabular}{@{}ll@{}}
\toprule
Hyper-parameter             & Value  \\ 
\midrule
Initializing dataset size & 30K \\
Number of data aggregation epoch & 4 \\
Interactions per round & 5000 \\
SGD batch size & 256\\
Learning rate & 0.0004\\
\bottomrule
\end{tabular}
\end{minipage}\hfill
\begin{minipage}{0.45\linewidth}
\centering
\caption{IWR}
\begin{tabular}{@{}ll@{}}
\toprule
Hyper-parameter             & Value  \\ 
\midrule
Initializing dataset size & 30K \\
Number of data aggregation epoch & 4 \\
Interactions per round & 5000 \\
SGD batch size & 256\\
Learning rate & 0.0004\\
Re-weight data distribution & True \\
\bottomrule
\end{tabular}
\end{minipage}
\end{table}

\end{document}